% ****** Start of file main.tex - based on apssamp.tex ******
%
%   This file is part of the APS files in the REVTeX 4.2 distribution.
%   Version 4.2a of REVTeX, December 2014
%
%   Copyright (c) 2014 The American Physical Society.
%
%   See the REVTeX 4 README file for restrictions and more information.
%
% TeX'ing this file requires that you have AMS-LaTeX 2.0 installed
% as well as the rest of the prerequisites for REVTeX 4.2
%
% See the REVTeX 4 README file
% It also requires running BibTeX. The commands are as follows:
%
%  1)  latex apssamp.tex
%  2)  bibtex apssamp
%  3)  latex apssamp.tex
%  4)  latex apssamp.tex
%
\documentclass[%
 reprint,
 superscriptaddress,
%groupedaddress,
%unsortedaddress,
%runinaddress,
%frontmatterverbose, 
%preprint,
%preprintnumbers,
%nofootinbib,
%nobibnotes,
%bibnotes,
 amsmath,amssymb,
 aps,
 prr,
%pra,
%prb,
%rmp,
%prstab,
%prstper,
%floatfix,
]{revtex4-2}

% Fix "jnrlst (dependency: not reversed) set 1" warning
% https://tex.stackexchange.com/q/458544/23046
\bibliographystyle{apsrev4-2}

\usepackage{graphicx}% Include figure files
\usepackage{dcolumn}% Align table columns on decimal point
\usepackage{bm}% bold math
\usepackage{hyperref}% add hypertext capabilities
\hypersetup{% https://tex.stackexchange.com/a/847/23046
    breaklinks=true,   % splits links across lines (fixes issue when built with arxiv)
    colorlinks,
    linkcolor={red!50!black},
    citecolor={blue!50!black},
    urlcolor={blue!80!black}
}
%\usepackage[mathlines]{lineno}% Enable numbering of text and display math
%\linenumbers\relax % Commence numbering lines

%\usepackage[showframe,%Uncomment any one of the following lines to test 
%%scale=0.7, marginratio={1:1, 2:3}, ignoreall,% default settings
%%text={7in,10in},centering,
%%margin=1.5in,
%%total={6.5in,8.75in}, top=1.2in, left=0.9in, includefoot,
%%height=10in,a5paper,hmargin={3cm,0.8in},
%]{geometry}

%% ==============  extra packages ==============

\usepackage{multirow}
\usepackage[linesnumbered, ruled, vlined]{algorithm2e}
\usepackage{graphicx}% Include figure files
\usepackage{subcaption} % throws warning
\usepackage{blkarray}
\usepackage{tabularx}
\usepackage{listings}
\usepackage{amsthm}
\usepackage{cleveref} % Has known bug: https://github.com/AASJournals/AASTeX60/issues/69#issuecomment-389945092
\usepackage{cancel}
\usepackage{booktabs}
\usepackage{amsfonts}
\usepackage{bbm}
\usepackage{tikz}
\usetikzlibrary{shapes}
\usetikzlibrary{shapes.geometric}
\usetikzlibrary{positioning}
\usetikzlibrary{calc, arrows.meta}
\usepackage{pgfplots}
\pgfplotsset{compat=1.18}
\usepackage[inline]{enumitem} % https://tex.stackexchange.com/a/146311/23046
\usepackage{mathtools}

\DeclareMathOperator*{\argmax}{arg\,max}

\theoremstyle{definition}

% Denote the end of a definition/theorem with a symbol
% https://tex.stackexchange.com/questions/16453/denoting-the-end-of-example-remark/32394#32394
\newtheorem{definition}{Definition}[section]
%\AtBeginEnvironment{definition}{%
%  \pushQED{\qed}\renewcommand{\qedsymbol}{$\square$}%
%}
%\AtEndEnvironment{definition}{\popQED\enddefinition}

\newtheorem{theorem}{Theorem}[section]
%\AtBeginEnvironment{theorem}{%
%  \pushQED{\qed}\renewcommand{\qedsymbol}{$\square$}%
%}
%\AtEndEnvironment{theorem}{\popQED\endtheorem}

\newtheorem{corollary}{Corollary}[section]

% Custom commands

\newcommand{\contract}{{\texttt{con}}}

\newcommand{\ve}{E = e}  % evidence
\newcommand{\vq}{Q = q}  % query
\newcommand{\vs}{\boldsymbol{s}}  % any
\newcommand{\vm}{M = m}  % marginalized
\newcommand{\dom}[1]{\mathcal{D}_{#1}}

\newcommand{\slice}[3]{{#1_{#2 | #3}}}
\newcommand{\tropicalcontract}{\texttt{tcon}}

% Nice looking empty set (https://tex.stackexchange.com/a/22799)

\let\emptyset\varnothing

% Source:
% https://github.com/matplotlib/matplotlib/issues/9460#issuecomment-875185352

% Six color

%\definecolor{c01}{HTML}{5790fc}
%\definecolor{c02}{HTML}{f89c20}
%\definecolor{c03}{HTML}{e42536}
%\definecolor{c04}{HTML}{964a8b}
%\definecolor{c05}{HTML}{9c9ca1}
%\definecolor{c06}{HTML}{7a21dd}

\definecolor{c01}{HTML}{4477AA}
\definecolor{c02}{HTML}{EE6677}
\definecolor{c03}{HTML}{228833}
\definecolor{c04}{HTML}{CCBB44}
\definecolor{c05}{HTML}{66CCEE}
\definecolor{c06}{HTML}{AA3377}
\definecolor{c07}{HTML}{BBBBBB}
\definecolor{c08}{HTML}{BBBBBB}

%% ==============  end - extra packages ==============

\begin{document}

\preprint{APS/123-QED}

\title{Probabilistic Inference in the Era of Tensor
Networks\texorpdfstring{\\}{} and Differential Programming}
%\thanks{A footnote to the article title}%

\author{Martin Roa-Villescas}
 \email{m.roa.villescas@tue.nl}
\affiliation{Eindhoven University of Technology, Eindhoven, The~Netherlands}%
\author{Xuanzhao Gao}
 %\email{xz.gao@connect.ust.hk}
\affiliation{Hong Kong University of Science and Technology (Guangzhou), Guangzhou, China}%
 %\altaffiliation[Also at ]{Physics Department, XYZ University.}%Lines break automatically or can be forced with \\
\author{Sander Stuijk}%
 %\email{s.stuijk@tue.nl}
\author{Henk Corporaal}%
 %\email{h.corporaal@tue.nl}
\affiliation{Eindhoven University of Technology, Eindhoven, The~Netherlands}%
\author{Jin-Guo Liu} % move to second place
 %\email{jinguoliu@hkust-gz.edu.cn}
\affiliation{Hong Kong University of Science and Technology (Guangzhou), Guangzhou, China}%

\date{\today}% It is always \today, today,
             %  but any date may be explicitly specified

\begin{abstract}

Probabilistic inference is a fundamental task in modern machine learning.
Recent advances in tensor network (TN) contraction algorithms have enabled the
development of better exact inference methods. However, many common inference
tasks in probabilistic graphical models (PGMs) still lack corresponding
TN-based adaptations. In this work, we advance the connection between PGMs and
TNs by formulating and implementing tensor-based solutions for the following
inference tasks: (i)~computing the partition function, (ii)~computing the
marginal probability of sets of variables in the model, (iii)~determining the
most likely assignment to a set of variables, and (iv)~the same as (iii) but
after having marginalized a different set of variables. We also present a
generalized method for generating samples from a learned probability
distribution. Our work is motivated by recent technical advances in the fields
of quantum circuit simulation, quantum many-body physics, and statistical
physics. Through an experimental evaluation, we demonstrate that the
integration of these quantum technologies with a series of algorithms
introduced in this study significantly improves the effectiveness of existing
methods for solving probabilistic inference tasks.

\end{abstract}

%\keywords{Suggested keywords}%Use showkeys class option if keyword
                              %display desired
\maketitle

%\tableofcontents

Probabilistic inference is a fundamental component of machine learning. It
enables machines to reason, predict, and assist experts in making decisions
under uncertain conditions. The main challenge in applying exact inference
techniques lies in the explosion of the computational cost as the number
of variables involved increases. Unfortunately, modeling real-world problems
often demands a high number of variables. Because of this, performing
probabilistic inference remains an intractable endeavor in many practical
applications.

In the past decades, several methods have been developed to enhance the
computational efficiency of exact inference in complex models. Clustering
methods, which include the family of junction tree
algorithms~\cite{lauritzen1988local,jensen1990bayesian}, Symbolic
probabilistic
inference~\cite{shachter1990symbolic,li1994efficient,gehr2016psi}, weighted
model counting~\cite{chavira2008probabilistic,holtzen2020scaling}, and
differential-based methods~\cite{darwiche2003differential,darwiche2020advance}
stand out as popular approaches.
% Differential-based methods map probabilistic inference, especially parameter
% learning, into tensor computations to leverage the significant advancements
% in tensor-based technologies in recent years. The concept of using tensors
% to model probability distributions in vast spaces has been extensively
% studied in quantum physics~\cite{orus2014practical}.

Tensor networks (TNs), widely used in quantum many-body physics and quantum
computation~\cite{nielsen2010quantum}, are gaining increasing attention in the
machine learning community. These networks have been shown to be an
exceptionally powerful framework for modeling many-body quantum
states~\cite{orus2014advances}. Notable examples of TNs include Matrix Product
States (MPS)~\cite{perez2007matrix}, Tree Tensor Networks
(TTN)~\cite{shi2006classical}, Multi-scale Entanglement Renormalization Ansatz
(MERA)~\cite{vidal2007entanglement}, and Projected Entangled Pair States
(PEPS)~\cite{verstraete2004renormalization}. In recent years, they have become
increasingly popular for classical benchmarking of quantum computing
devices~\cite{arute2019quantum,markov2008simulating,pan2022simulation,gao2021limitations}.
The application of TNs in machine learning has been primarily focused on
generative modeling, aiming to learn a model's joint probability distribution
from data and generate samples from it. For instance, Han et
al.~\cite{han2018unsupervised} proposed using an MPS network for this purpose,
ensuring that the TN topology is constrained to a chain-like structure.
Building on this idea, Cheng~et~al.~\cite{cheng2019tree} advocated for the use
of a TTN over an MPS network, aiming to enhance representational capabilities
and to improve efficiency in both training and sampling.

While there have been notable advancements in understanding the theoretical
duality between TNs and PGMs~\cite{robeva2019duality} and the integration of
several TN techniques for generative
sampling~\cite{han2018unsupervised,cheng2019tree}, many common probabilistic
tasks in PGMs still lack corresponding TN-based adaptations. In this work, we
bridge the gap between PGMs and TNs further by formulating and implementing
tensor-based solutions for a series of important probabilistic tasks.
Specifically, given evidence for a subset of the variables in the model, we
formulate and provide TN-based implementations for computing:
\begin{enumerate*}[label=(\roman*)]
  \item the partition function,
  \item the marginal probability of sets of variables, 
  \item the most likely assignment to a set of variables,
  \item the most likely assignment to a set of variables after marginalizing a
    different set, and
  \item unbiased variable sampling, which generalizes the work of
    Han~et~al.~\cite{han2018unsupervised} and Cheng
    et~al.~\cite{cheng2019tree}.
\end{enumerate*}
Our work introduces a novel unity-tensor approach to compute marginal
probabilities and the most likely assignment. This technique significantly
reduces the computational cost of calculating multiple marginal probabilities.

Inspired by recent technical progress in the fields of quantum circuit
simulation, quantum many-body physics, and statistical physics, our research
aims to capitalize on these advancements. We employ new hyper-optimized
contraction order finding algorithms~\cite{gray2021hyper,kalachev2022multi}
that have evolved in classical benchmarking quantum computing
devices~\cite{arute2019quantum,markov2008simulating,pan2022simulation,gao2021limitations}.
These hyper-optimized contraction ordering algorithms optimize both the
computation time and runtime memory usage, resulting in a significant
improvement in performance. This work also benefits from the latest advances
of tropical tensor networks~\cite{liu2021tropical} followed by the
introduction of generic tensor networks~\cite{liu2022computing}, which allow
us to seamlessly devise performant solutions for the different inference tasks
described earlier by adjusting the element types of a consistent tensor
network. Our implementation leverages cutting-edge developments commonly found
in tensor network libraries, including a highly optimized set of BLAS
routines~\cite{blackford2002updated,liu2023tropical} and GPU technology. 

We present experimental results demonstrating that our tensor-based
implementation is highly effective in advancing current methods for solving
probabilistic inference tasks. Our library exhibits speedups of three to four
orders of magnitude compared to a series of established solvers for the
following exact inference tasks: computing the partition function (PR), the
marginal probability distribution over all variables given evidence (MAR), the
most likely assignment to all variables given evidence (MPE), and the most
likely assignment to the query variables after marginalizing out the remain
variables (MMAP). Furthermore, we present experimental results indicating that
by employing a GPU instead of a CPU, our proposed implementation can
accelerate the inference of MMAP tasks by up to two orders of magnitude when
the problem's computational cost exceeds a certain threshold. The ability of
our library to facilitate the seamless use of a GPU instead of a CPU for
solving probabilistic inference tasks represents a significant advantage. The
source code for the methods described in this paper is available in a Julia
package by the name of \texttt{TensorInference.jl}~\cite{roa2023tensor},
licensed under the MIT open-source license.
%The package can be accessed at
%\url{https://github.com/TensorBFS/TensorInference.jl}. 

The remainder of this paper is organized as follows. \Cref{sec:tn} provides a
review of tensor networks, laying the foundational concepts necessary for
understanding subsequent discussions. In
\Cref{sec:tensor-network-for-probabilistic-modeling}, we delve into the
formulation of various probabilistic modeling tasks in terms of tensor network
contractions, including the Partition Function (\Cref{sec:pr}), the Marginal
Probability (\Cref{sec:mar}), the Most Probable Explanation (\Cref{sec:mpe}),
the Maximum Marginal a Posteriori (\Cref{sec:mmap}), and Sampling
(\Cref{sec:sampling}). \Cref{sec:benchmarks} presents benchmarks and empirical
results to demonstrate the practical implications of our approach. Finally, we
conclude the paper in \Cref{sec:conclusions}, where we discuss
the implications, limitations, and potential future directions of our work.

\section{Tensor networks} \label{sec:tn}

Tensor networks serve as a fundamental tool for modeling and analyzing
correlated systems. This section reviews their fundamental concepts.
%, highlighting their connection with probabilistic graphical models.

A tensor is a mathematical object that generalizes scalars, vectors, and
matrices. It can have multiple dimensions and is used to represent data in
various mathematical and physical contexts. It is formally defined as follows:

\begin{definition}[Tensor]
  A tensor $T$ associated to a set of discrete variables $V$ is defined as a
  function that maps each possible instantiation of the variables in its scope
  $\dom{V} = \prod_{v\in V} \dom{v}$ to an element in the set $\mathcal{E}$,
  where $\dom{v}$ is the set of all possible values that the variable $v$ can
  take. The function $T_V$ is given by
  \begin{equation}
    T_{V}: \prod_{v \in V} \dom{v} \rightarrow \mathcal{E}.
  \end{equation}
  Within the context of probabilistic modeling, the elements in $\mathcal{E}$
  are non-negative real numbers, while in other scenarios, they can be of
  generic types.
\end{definition}

Tensors are typically represented as multidimensional arrays, where each
dimension is assigned a specific label or name. In probabilistic modeling,
these labels correspond to \emph{random variables} (or \emph{variables} for
short), and hence these terms will be used interchangeably throughout the rest
of this paper. The collective set of variables upon which a tensor operates is
known as its \emph{scope}. Before introducing the definition of a tensor
network, it is important to define the concept of \emph{slicing} (or
\emph{indexing}) tensors based on variable assignments. Let $T_V$ be a tensor
defined over the set of variables $V$. Let $M$ be another set of variables
with an arbitrary relationship to the set $V$, i.e., $M$ and $V$ may have all,
some, or no elements in common, or one may be a subset of the other. The
notation $M = m$ denotes the assignment of specific values denoted by $m$ to
the variables in $M$. The operation of slicing a tensor, denoted as
$\slice{T}{V}{M=m}$, involves evaluating the tensor $T_V$ according to the
assignment $M = m$. This operation effectively reduces the dimensions of $T_V$
by constraining it to the subspace where $M = m$. Note that if $V$ and $M$ are
disjoint, $T_V$ remains unchanged.

We now turn our attention to the formal definition of a \emph{tensor network}. 

\begin{definition}[Tensor Network~\cite{liu2022computing, cirac2021matrix, orus2014practical}] \label{def:tnet}
 A tensor network is a mathematical framework for defining multilinear maps,
 which can be represented by a triple
  $\mathcal{N} = (\Lambda, \mathcal{T}, V_0)$, where:
  \begin{itemize}
    \item $\Lambda$ is the set of variables present in the network
      $\mathcal{N}$.
    \item $\mathcal{T} = \{ T_{V_k} \}_{k=1}^{K}$ is the set of
      input tensors, where each tensor $T_{V_k}$ is associated with the
      labels $V_k$.
    \item $V_0$ specifies the labels of the output tensor.
  \end{itemize}
  Specifically, each tensor $T_{V_k} \in \mathcal{T}$ is labeled by a set of
  variables $V_k \subseteq \Lambda$, where the cardinality $|V_k|$ equals the
  rank of $T_{V_k}$. The multilinear map, or the \textbf{contraction}, applied
  to this triple is defined as
  \begin{equation} \label{eq:contraction-definition}
    T_{V_0} = \contract(\Lambda, \mathcal{T}, V_0) \overset{\mathrm{def}}{=} \sum_{m \in \dom{\Lambda
    \setminus V_0}} \prod_{T_V \in \mathcal{T}} \slice{T}{V}{M=m},
  \end{equation}
  where $M = \Lambda \setminus V_0$.
\end{definition}

For instance, matrix multiplication can be described as the contraction of a
tensor network given by
\begin{equation}
  (AB)_{\{i, k\}} = \contract\left(\{i,j,k\}, \{A_{\{i, j\}}, B_{\{j, k\}}\}, \{i, k\}\right),
\end{equation}
where matrices $A$ and $B$ are input tensors containing the variable sets
$\{i, j\}, \{j, k\}$, respectively, which are subsets of $\Lambda = \{i, j,
k\}$. The output tensor is comprised of variables $\{i, k\}$ and the summation
runs over variables $\Lambda \setminus \{i, k\} = \{j\}$. The contraction
corresponds to
\begin{equation} \label{eq:matrix-multiplication-contraction}
  (A B)_{\{i, k\}} = \sum_j
  A_{\{i,j\}}B_{\{j, k\}}.
\end{equation}

\Cref{def:tnet} introduces a minor generalization of the standard tensor
network definition commonly used in physics. It allows a label to appear more
than twice across the tensors in the network, deviating from the conventional
practice of restricting each label to two appearances. This generalized form,
while maintaining the same level of representational power, has been
demonstrated to potentially reduce the network's
treewidth~\cite{liu2022computing}, a metric that measures its connectivity.

Diagrammatically, a tensor network can be represented as an \textit{open
hypergraph}, where each tensor is mapped to a vertex and each variable is
mapped to a hyperedge. Two vertices are connected by the same hyperedge if and
only if they share a common variable. The diagrammatic representation of the
matrix multiplication shown in \Cref{eq:matrix-multiplication-contraction} is
given as follows: 
\begin{center}
  \begin{tikzpicture}[
    mytensor/.style={
      circle,
      thick,
      draw=black!100,
      font=\small,
      minimum size=0.5cm
    },
    myedge/.style={
      very thick,
    },
    ]
    \matrix[row sep=0.8cm,column sep=0.8cm,ampersand replacement= \& ] {
      \node (1) {};                                               \&
      \node (a) [mytensor] {$A$};                                 \&
      \node (b) [mytensor] {$B$};                                 \&
      \node (2) {};                                               \&
                                                                  \\
    };
    \draw [myedge, color=c01] (1) edge node[below] {$i$} (a);
    \draw [myedge, color=c02] (a) edge node[below] {$j$} (b);
    \draw [myedge, color=c03] (b) edge node[below] {$k$} (2);
  \end{tikzpicture}
\end{center}
Here, we use different colors to denote different hyperedges. Hyperedges for
$i$ and $k$ are left open to denote variables of the output tensor. A somewhat
more complex example of this is as follows:
\begin{align} \label{eq:more-complex-contraction}
  \begin{split}
    &\contract(\{i,j,k,l,m,n\}, \\
    & \quad\quad\{A_{\{i, l\}}, B_{\{l\}}, C_{\{k, j, l\}}, D_{\{k, m, n\}}, E_{\{j, n\}}\},\\
    & \quad\quad\{i,m\}) \\
    & =\sum_{j,k,l,n}A_{\{i,l\}} B_{\{l\}} C_{\{k,l\}} D_{\{k, m\}} E_{\{j, n\}}.
  \end{split}
\end{align}
Note that the variable $l$ is shared by three tensors, making regular edges,
which by definition connect two nodes, insufficient for its representation.
This motivates the need for hyperedges, which can connect a single variable to
any number of nodes. The hypergraph representation is given as:
\begin{center}
  \begin{tikzpicture}[
    mytensor/.style={
      circle,
      thick,
      draw=black!100,
      font=\small,
      minimum size=0.5cm
    },
    myedge/.style={
      very thick,
    },
    ]
    \matrix[row sep=1.0cm,column sep=0.4cm,ampersand replacement= \& ] {
                                  \&
                                  \&
      \node (b) [mytensor] {$B$}; \&
                                  \&
      \node (e) [mytensor] {$E$}; \&
                                  \&
                                  \&
                                    \\
      \node (i) {$i$};            \&
      \node (a) [mytensor] {$A$}; \&
      \node (l) {$l$};            \&
      \node (c) [mytensor] {$C$}; \&
      \node (k) {$k$};            \&
      \node (d) [mytensor] {$D$}; \&
      \node (m) {$m$};            \&
                                    \\
    };
    \path (c) -- (e) node[midway] (j) {$j$};
    \path (d) -- (e) node[midway] (n) {$n$};
    \draw [myedge, color=c02] (b) edge (l);
    \draw [myedge, color=c03] (i) edge (a);
    \draw [myedge, color=c02] (a) edge (l);
    \draw [myedge, color=c02] (l) edge (c);
    \draw [myedge, color=c04] (c) edge (k);
    \draw [myedge, color=c04] (d) edge (k);
    \draw [myedge, color=c05] (d) edge (m);
    \draw [myedge, color=c01] (c) edge (j);
    \draw [myedge, color=c01] (e) edge (j);
    \draw [myedge, color=c06] (d) edge (n);
    \draw [myedge, color=c06] (e) edge (n);
  \end{tikzpicture}
\end{center}

We would now like to stress an important property of tensor networks, namely
the \emph{contraction order}. While the summations
in~\Cref{eq:more-complex-contraction} (over $j, k, l$ and $n$) can be carried
out in any order without affecting the contraction result, the order in which
these summations are performed significantly impacts the computational cost
required to contract the network. Finding the optimal order of variables to be
contracted in a tensor network is crucial for overall efficiency. To minimize
the computational cost of a TN contraction, one must optimize over the
different possible orderings of pairwise contractions and find the optimal
case~\cite{orus2014practical}. This problem is NP-hard. However, several
efficient heuristic contraction order finding algorithms~\cite{gray2021hyper,
kalachev2022multi} have been developed by the community. Given a contraction
order, each pairwise contraction can be further decomposed into a series of
BLAS operations, which are highly optimized for modern
hardware~\cite{roa2023scaling}.

To illustrate this point, consider the two contraction orders specified below
for evaluating \Cref{eq:more-complex-contraction} using binary trees:
\newcommand{\vspacing}{1.10} % Adjust this value to change the vertical distance
\tikzset{
  bctensor/.style={
    circle,
    thick,
    draw=black!100,
    font=\small,
    minimum size=0.5cm
  },
  bcedge/.style={
    thick,
  }
}

\begin{center}
  \begin{tikzpicture}
    \node (abcde) [bctensor,label=above:{$ABCDE$}] at (0, 0) {};

    \node (ab) [bctensor,label=above:{$AB$}] at (-1, -\vspacing) {};
    \node (cde) [bctensor,label={[xshift=0.2cm]above:{$CDE$}}] at (1, -\vspacing) {$$};

    \node (a) [bctensor,label={[xshift=-0.05cm]above:{$A$}}] at (-1.5, -2*\vspacing) {};
    \node (b) [bctensor,label={[xshift=0.05cm]above:{$B$}}] at (-0.5, -2*\vspacing) {};
    \node (e) [bctensor,label={[xshift=-0.05cm]above:{$E$}}] at (0.5, -2*\vspacing) {};
    \node (cd) [bctensor,label={[xshift=0.2cm]above:{$CD$}}] at (1.5, -2*\vspacing) {};
                                
    \node (c) [bctensor,label=above:{$C$}] at (1.0, -3*\vspacing) {};
    \node (d) [bctensor,label={[xshift=0.05cm]above:{$D$}}] at (2.0, -3*\vspacing) {};
    \draw [bcedge] (c) edge (cd);
    \draw [bcedge] (d) edge (cd);
    \draw [bcedge] (a) edge (ab);
    \draw [bcedge] (b) edge (ab);
    \draw [bcedge] (e) edge (cde);
    \draw [bcedge] (cd) edge (cde);
    \draw [bcedge] (cde) edge (abcde);
    \draw [bcedge] (ab) edge (abcde);
    \node at (0, -3.5*\vspacing) {(a)};
  \end{tikzpicture}
  \begin{tikzpicture}
    \node (abcde) [bctensor,label=above:{$ABCDE$}] at (0, 0) {};

    \node (ab) [bctensor,label=above:{$AB$}] at (-1, -\vspacing) {};
    \node (cde) [bctensor,color=c02,label={[xshift=0.2cm]above:{$CDE$}}] at (1, -\vspacing) {$$};

    \node (a) [bctensor,label={[xshift=-0.05cm]above:{$A$}}] at (-1.5, -2*\vspacing) {};
    \node (b) [bctensor,label={[xshift=0.05cm]above:{$B$}}] at (-0.5, -2*\vspacing) {};
    \node (c) [bctensor,color=c02,label={[xshift=-0.05cm]above:{$C$}}] at (0.5, -2*\vspacing) {};
    \node (de) [bctensor,color=c02,label={[xshift=0.2cm]above:{$DE$}}] at (1.5, -2*\vspacing) {};
                                
    \node (e) [bctensor,color=c02,label={[xshift=-0.05cm]above:{$E$}}] at (1.0, -3*\vspacing) {};
    \node (d) [bctensor,color=c02,label={[xshift=0.05cm]above:{$D$}}] at (2.0, -3*\vspacing) {};
    \draw [bcedge,color=c02] (e) edge (de);
    \draw [bcedge,color=c02] (d) edge (de);
    \draw [bcedge] (a) edge (ab);
    \draw [bcedge] (b) edge (ab);
    \draw [bcedge,color=c02] (c) edge (cde);
    \draw [bcedge,color=c02] (de) edge (cde);
    \draw [bcedge] (cde) edge (abcde);
    \draw [bcedge] (ab) edge (abcde);
    \node at (0, -3.5*\vspacing) {(b)};
  \end{tikzpicture}
\end{center}

These binary trees specify the order of pairwise contractions between tensors,
starting from the bottom (leaves) and moving to the top (root). Each leaf
represents an initial tensor, and each internal node results from contracting
two child tensors and summing out any indices not needed for later operations.
Note that directly evaluating \Cref{eq:more-complex-contraction} requires
$O(n^6)$ time, where $n$ is the dimension of each variable. In contrast, both
contraction orders illustrated above reduce the time complexity to $O(n^4)$.
However, there is an important difference between these two orders,
highlighted in red. Specifically, contraction order (b) is preferred over (a)
due to its lower \emph{space complexity}, which is determined by the highest
rank among all intermediate tensors. For example, the intermediate tensor $CD$
in order (a) has a rank of 4, whereas tensor $DE$ in order (b) has a rank of
3. Lower space complexity helps to reduce the memory usage bottleneck in
tensor network computations.

\section{Tensor networks for probabilistic modeling}
\label{sec:tensor-network-for-probabilistic-modeling}

Probabilistic graphical models (PGMs) are a class of models that use graphs to
represent complex dependencies between random variables and reason about them,
with Bayesian networks, Markov random fields, and factor graphs being among
the most prevalent examples. While tensor networks and PGMs share a conceptual
foundation in representing multivariate relationships graphically, they have
traditionally evolved in parallel within distinct fields of study. Despite
their different origins, both frameworks exhibit remarkable similarities in
structure and functionality. They are both used to decompose high-dimensional
objects into a network or graph of simpler, interconnected components. The aim
of this section is to reformulate probabilistic modeling in terms of tensor
networks, thereby allowing the field of probabilistic modeling to leverage the
remarkable developments achieved in tensor network modeling in recent years.

The analyses and discussions in this section are framed within the context of
a probabilistic model. This model is characterized by a set of variables
$\Lambda$ with a corresponding joint \emph{probability mass function}
$p(\Lambda)$. Furthermore,  $p(\Lambda)$ is represented as a product of
tensors in $\mathcal{T}$, where each tensor $T_V \in \mathcal{T}$ is
associated with a subset of variables $V$ in $\Lambda$. Within $\Lambda$, we
distinguish three disjoint subsets of variables: the \emph{query} variables
$Q$, representing our variables of interest; the \emph{evidence} variables
$E$, which denote observed variables; and the \emph{nuisance} variables $M$,
which include the remaining variables.

In what follows, we present the formulation of prevalent probabilistic
inference tasks through the application of tensor network methodologies. These
tasks include:
\begin{enumerate}
  \item Calculating the \emph{partition function} (PR), also referred to as
    the \emph{probability of evidence} (\Cref{sec:pr}).
  \item Computing the marginal probability distribution over sets of variables
    given evidence (MAR) (\Cref{sec:mar}).
  \item Finding the most likely assignment to all variables given evidence,
    formally referred to as the \emph{most probable explanation} (MPE)
    (\Cref{sec:mpe}).
  \item Finding the most likely assignment to a set of query variables after
    marginalizing out the remaining variables, also known as the \emph{Maximum
    Marginal a Posteriori} (MMAP) estimate (\Cref{sec:mmap}).
  \item Generating samples from the learned distribution given evidence, also
    known as \emph{generative modeling} (\Cref{sec:sampling}).
\end{enumerate}
For more information about these tasks, refer to the website of the UAI
2022 Probabilistic Inference Competition~\cite{dechter2022uai}.

\subsection{Partition Function (PR)} \label{sec:pr}

The partition function is a central concept in statistical mechanics and
probabilistic graphical models. In statistical mechanics, it sums over all
possible states of a system, weighted by their energy, to derive key
thermodynamic quantities. In probabilistic models, it not only normalizes the
joint probability distribution, ensuring that the probabilities of all
possible outcomes sum to one, but also facilitates model comparison by
providing a measure of how well each model explains the observed data. 

Suppose we are given some evidence $e$ observed over a set of variables $E
\subseteq \Lambda$. The partition function is calculated by summing the joint
distribution $p$ over all possible values of the variables $M \subseteq
\Lambda$ that are not in $E$, i.e., $E \cap M = \emptyset$. Thus, the
partition function corresponds to:
\begin{equation}
  p(\ve) = \sum_{m \in \mathcal{D}_{\Lambda \setminus E}} p(\ve, \vm).
\end{equation}
Let us denote the set of tensors associated with the variables in $\Lambda$ as
$\mathcal{T} = \{T_V\}$, where $T_V$ is a tensor associated with the variables
$V \subseteq \Lambda$.
The partition function can be expressed as the tensor network contraction
given by:
\begin{equation}
  p(\ve) = \sum_{m \in \dom{\Lambda \setminus E}} \prod_{T_V \in \mathcal{T}} \slice{T}{V}{E=e, M=m},
\end{equation}
where $\mathcal{T}_{{\ve}} = \{\slice{T}{V}{E=e} \mid T_V \in \mathcal{T}\}$
is the set of tensors sliced over the fixed values of the evidence variables.
We can express this operation more succinctly using the definition of a
tensor contraction shown in \Cref{eq:contraction-definition}, which results in:
\begin{equation} \label{eq:pr}
  p(\ve) = \contract(\Lambda \setminus E, \mathcal{T}_{\ve}, \emptyset).
\end{equation}
Here, since evidence variables are fixed, the contraction is performed over
the remaining variables in $\Lambda \setminus E$. Correspondingly, the tensors
in $\mathcal{T}_{\ve}$ are sliced according to the evidence. To obtain the
marginal probability, all remaining variables are marginalized, leaving the
output tensor with an empty label set $\emptyset$.

\subsection{Marginal Probability (MAR)} \label{sec:mar}

The marginal probability (MAR) task involves computing the conditional
probability distribution for the set of query variables $Q$, based on known
information about the evidence variables $E$, i.e.~$p(Q \mid \ve)$. This
process requires marginalizing out nuisance variables $M$ from the joint
distribution $p(\Lambda)$, effectively averaging their impact within the joint
probability distribution. Such averaging is crucial as it accounts for the
indirect effect of these variables on the resulting marginal probabilities,
thereby enabling predictions and informed decision-making with limited
information. In what follows, we introduce a novel tensor-based algorithm for
efficiently computing marginal probability distributions over multiple sets of
query variables, which demonstrates improved performance over traditional
approaches, such as the junction tree algorithm~\cite{lauritzen1988local}.

The marginal probability query, given some evidence $\ve$, computes the
conditional distribution over the query variables $Q \subseteq \Lambda$. This
is denoted as $p(\vq \mid \ve)$, where $q \in \dom{Q}$ and it is ensured that
$E \cap Q = \emptyset$. The marginal probability can be obtained as follows:
\begin{equation}\label{eq:mar-prob}
  \begin{split}
    p(Q \mid \ve) = \frac{p(Q, \ve)}{p(\ve)}.
  \end{split}
\end{equation}
The numerator $p(Q, \ve)$ corresponds to the joint marginal probability of
configurations. This is given by the following equation:
\begin{equation}
  p(Q, \ve) = \sum_{m \in \dom{\Lambda \setminus (Q, E)}} p(Q, \ve, \vm),
\end{equation}
or, equivalently, by the following tensor network contraction:
\begin{equation}\label{eq:evidence}
    p(Q, \ve) = \contract(\Lambda \setminus E, \mathcal{T}_{\ve}, Q).
\end{equation}
The denominator $p(\ve)$ in \Cref{eq:mar-prob} corresponds to the partition
function, calculated according to \Cref{eq:pr}.

Consider the scenario where we want to obtain the marginal probabilities for
multiple sets of query variables. For simplicity, we consider the sets of
single variables $Q_i \in \mathcal{Q}$, where $\mathcal{Q} = \{\{q_i\} \mid
q_i \in \Lambda \setminus E\}$. Using the above strategy would require
contracting $O(|\mathcal{Q}|)$ different tensor networks, which is
inefficient. In the following, we present an automatic
differentiation~\cite{liao2019differentiable} based approach to obtain the
marginal probabilities for all sets of variables in $\mathcal{Q}$ by
contracting the tensor network only once. The proposed algorithm reduces the
problem of finding marginal probability distributions to the problem of
finding the gradients of introduced auxiliary tensors, which can be
efficiently handled by differential programming. The differentiation rules for
tensor network contraction can be represented as the contraction of the tensor
network shown in \Cref{thm:diff}.
\begin{theorem}[Tensor network differentiation]\label{thm:diff}
    Let $(\Lambda, \mathcal{T}, \emptyset)$ be a tensor network with scalar
    output. The gradient of the tensor network contraction with respect to
    $T_V \in \mathcal{T}$ is
    \begin{equation}
      \frac{\partial \contract(\Lambda, \mathcal{T}, \emptyset)}{\partial T_V} =
      \contract(\Lambda, \mathcal{T} \setminus \{T_V\}, V).
    \end{equation}
    That is, the gradient corresponds to the contraction of the tensor network
    with the tensor $T_V$ removed and the output label set to $V$.
\end{theorem}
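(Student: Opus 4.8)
The plan is to reduce the statement to an entrywise identity and then exploit the fact that the scalar $\contract(\Lambda, \mathcal{T}, \emptyset)$ is \emph{linear} in the entries of any single input tensor. First I would expand \Cref{eq:contraction-definition} with $V_0 = \emptyset$, so that $M = \Lambda$ and the contraction becomes a full sum over all variable assignments, with the factor coming from the distinguished tensor pulled out of the product:
\begin{equation}
  \contract(\Lambda, \mathcal{T}, \emptyset) = \sum_{m \in \dom{\Lambda}} \slice{T}{V}{\vm} \prod_{T_W \in \mathcal{T} \setminus \{T_V\}} \slice{T}{W}{\vm}.
\end{equation}
The crucial observation is that every tensor in $\mathcal{T}$ contributes exactly one factor to each product term, even when a label is shared by many tensors through a hyperedge, so the whole expression is a homogeneous degree-one polynomial in the entries of $T_V$. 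This linearity is what makes the derivative clean and free of higher-order corrections.

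Next I would differentiate with respect to a single component. Fixing an assignment $v \in \dom{V}$ and writing $T_V[v]$ for the corresponding entry (recalling that a tensor is a function on $\dom{V}$), I observe that $\slice{T}{V}{\vm}$ equals $T_V[v]$ precisely for those global assignments $m$ whose restriction to $V$ equals $v$, and is independent of $T_V[v]$ otherwise. Hence applying $\partial / \partial T_V[v]$ annihilates every term except those with $m|_V = v$, leaving
\begin{equation}
  \frac{\partial \contract(\Lambda, \mathcal{T}, \emptyset)}{\partial T_V[v]} = \sum_{\substack{m \in \dom{\Lambda} \\ m|_V = v}} \prod_{T_W \in \mathcal{T} \setminus \{T_V\}} \slice{T}{W}{\vm}.
\end{equation}

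The final step is to recognize the right-hand side above as the $v$-component of $\contract(\Lambda, \mathcal{T} \setminus \{T_V\}, V)$. Specializing \Cref{eq:contraction-definition} to the tensor set $\mathcal{T} \setminus \{T_V\}$ with output labels $V_0 = V$, the summation index ranges over $\dom{\Lambda \setminus V}$ while $V$ is held at its output value; re-parametrizing the constrained sum $\{m : m|_V = v\}$ by the free variables $\Lambda \setminus V$ then makes the two expressions coincide entry by entry. Assembling over all $v \in \dom{V}$ yields the claimed tensor identity. I expect the only delicate point to be the index bookkeeping in this last step, namely verifying that the map sending a global assignment $m$ with $m|_V = v$ to its restriction on $\Lambda \setminus V$ is a bijection onto the summation domain of the reduced contraction, so that no terms are double-counted or dropped. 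The multilinearity established at the outset guarantees that once this correspondence is pinned down, nothing further remains.
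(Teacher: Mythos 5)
Your argument is correct. The identity you need --- that the scalar $\contract(\Lambda,\mathcal{T},\emptyset)$ is a homogeneous degree-one polynomial in the entries of any single $T_V$, so that $\partial/\partial T_V[v]$ simply deletes the factor $\slice{T}{V}{M=m}$ from exactly the terms with $m|_V=v$ --- is the heart of the matter, and your final re-indexing step (the bijection between $\{m\in\dom{\Lambda}: m|_V=v\}$ and $\dom{\Lambda\setminus V}$) is exactly the bookkeeping required to match the definition of $\contract(\Lambda,\mathcal{T}\setminus\{T_V\},V)$; nothing is double-counted because each global assignment decomposes uniquely into its restriction to $V$ and to $\Lambda\setminus V$. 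The paper reaches the same conclusion by a different mechanism: in \Cref{sec:einback} it works with differential forms rather than coordinates, writing $\delta C = \contract(\Lambda,\{\delta A_{V_a}, B_{V_b}\},V_c)+\contract(\Lambda,\{A_{V_a},\delta B_{V_b}\},V_c)$ for a pairwise contraction, substituting into $\delta\mathcal{L}$, and reading off the adjoints $\overline{A}_{V_a},\overline{B}_{V_b}$ from the arbitrariness of the perturbations. That derivation directly yields the pairwise backward rule of \Cref{eq:einback} --- the form actually used in the reverse-mode implementation and in the cost analysis of \Cref{thm:complexity} --- and the global statement of \Cref{thm:diff} then follows by regarding the full contraction as $T_V$ paired against the contraction of $\mathcal{T}\setminus\{T_V\}$. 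Your route proves the global statement in one shot and is more elementary and self-contained; the paper's route buys the composable pairwise adjoint rule as a byproduct, which is what the algorithm needs when back-propagating through a binary contraction tree. Both ultimately rest on the same multilinearity, so there is no gap in your proposal.
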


The proof of \Cref{thm:diff} is given in \Cref{sec:einback}.
The algorithm to obtain the marginal probabilities for all sets of variables
in $\mathcal{Q}$ is summarized as follows:
\begin{enumerate}
  \item Add a unity tensor $\mathbbm{1}_{Q_i}$ to the tensor network for each
    variable set $Q_i \in \mathcal{Q}$. A unity tensor is defined as a tensor
    with all elements equal to one. The augmented tensor network is
    represented as follows:
    \begin{equation}
        \mathcal{T}_{\text{aug}} \leftarrow \mathcal{T}
        \cup \{\mathbbm{1}_{Q_i} \mid Q_i \in \mathcal{Q}\}.
    \end{equation}
    The introduction of unity tensors does not change the contraction result
    of a tensor network.
  \item \textbf{Forward pass}: Contract the augmented tensor network to obtain
    \begin{equation}
      p(\ve) = \contract(\Lambda \setminus E, (\mathcal{T}_{\rm aug})_{\ve},
      \emptyset).
    \end{equation}
    In practice, the tensor network is contracted according to a given
    pairwise contraction order of tensors, caching intermediate results for
    later use. This order can be specified using a binary tree, which we will
    refer to as a binary contraction tree.
  \item \textbf{Backward pass:} Compute the gradients of the introduced unity
    tensors by back propagating the contraction process in Step~2. During
    back-propagation, the cached intermediate results from Step~2 are used.
    The resulting gradients are
    \begin{equation}
      \mathcal{G} = \left\{\frac{\partial{p(\ve)}}{\partial \mathbbm{1}_{Q_i}} \;\middle|\; Q_i \in \mathcal{Q}\right\}.
    \end{equation}
    Each gradient tensor $\partial{p(\ve)}/\partial{\mathbbm{1}_{Q_i}}$
    corresponds to a joint probability $p(Q_i, \ve)$. Dividing this gradient
    tensor by the partition function $p(\ve)$ yields the marginal probability
    $p(Q_i \mid \ve)$.
\end{enumerate}

In Step~1, we augment the tensor network by adding a rank 1 unity tensor for
each variable in $\mathcal{T}$. These tensors, being vectors, can be absorbed
into existing tensors of an optimized contraction tree, thereby not
considerably affecting the overall computing time. However, the computational
cost may increase significantly when unity tensors for joint marginal
probabilities of multiple variables are introduced.

The caching of intermediate contraction results in Step~2 is automatically
managed by a differential programming framework. These cached results are then
utilized in the back-propagation step. While this caching does not
significantly increase the computing time, it does lead to greater memory
usage. Practically, the added memory cost is typically just a few times
greater than the forward pass's peak memory. This is due to the program's
non-linear nature, often constrained by a handful of intensive contraction
steps. Step~3 follows from the observation that for any $Q_i \in \mathcal{Q}$,
the following holds:
\begin{equation}
  p(\ve) = \sum_{q \in \dom{Q_i}}p(\ve, Q_i=q)\slice{\mathbbm{1}}{L}{Q_i=q}.
\end{equation}
Using \Cref{thm:diff}, differentiating $p(\ve)$ with respect to
$\mathbbm{1}_{Q_i}$ is equivalent to removing the unity tensor
$\mathbbm{1}_{Q_i}$ from the tensor network and setting the output label to
$Q_i$, the result of which corresponds to the joint probability $p(Q_i, \ve)$.

\begin{corollary} \label{thm:complexity}

  Let $\mathcal{P}$ be a program to contract a tensor network $(\Lambda,
  \mathcal{T}, \emptyset)$ using a binary contraction tree. The time required
  to differentiate $\mathcal{P}$ using reverse-mode automatic differentiation
  is three times that required to evaluate $\mathcal{P}$.

\end{corollary}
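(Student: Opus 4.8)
The plan is to reduce the claim to a per-node accounting along the binary contraction tree, and to show that at each internal node the backward work costs exactly twice the forward work, so that one forward pass together with one backward pass costs three times the forward pass alone.

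First I would fix notation. A binary contraction tree turns $\contract(\Lambda, \mathcal{T}, \emptyset)$ into a sequence of pairwise contractions, one per internal node. At an internal node we contract two (possibly intermediate) tensors $A$ and $B$, with label sets $V_A$ and $V_B$, into an output tensor $C$ with label set $V_C \subseteq V_A \cup V_B$, i.e.\ $C = \contract(V_A \cup V_B, \{A, B\}, V_C)$. The forward cost $t_{\text{node}}$ of this step is proportional to the number of scalar multiply--adds, namely $\prod_{v \in V_A \cup V_B} |\dom{v}|$, since for each joint assignment of the variables in $V_A \cup V_B$ one performs a single multiplication and accumulates it. The total forward time is then $T = \sum_{\text{nodes}} t_{\text{node}}$.

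Next I would analyze the backward pass node by node. Reverse-mode differentiation propagates an adjoint $\bar{C}$ (carrying labels $V_C$) and must produce the adjoints $\bar{A}$ and $\bar{B}$. Writing the local scalar $\contract(V_A \cup V_B, \{A, B, \bar{C}\}, \emptyset)$ and applying \Cref{thm:diff} to differentiate it with respect to $A$ and to $B$ yields
\[
  \bar{A} = \contract(V_A \cup V_B, \{B, \bar{C}\}, V_A), \qquad
  \bar{B} = \contract(V_A \cup V_B, \{A, \bar{C}\}, V_B).
\]
The key observation is that each adjoint contraction still ranges over the full index set $V_A \cup V_B$ (because $V_C \subseteq V_A \cup V_B$), so each has the same multiply--add count $\prod_{v \in V_A \cup V_B}|\dom{v}| = t_{\text{node}}$ as the forward step. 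Hence the backward work at the node is $2\,t_{\text{node}}$, and summing over all nodes gives a backward time of $2T$. Since the intermediate tensors are cached during the forward pass (Step~2 of the algorithm), the backward pass reuses them with no recomputation, so differentiating $\mathcal{P}$ amounts to one forward pass followed by one backward pass, costing $T + 2T = 3T$, i.e.\ three times the evaluation cost.

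The main obstacle I anticipate is making the per-node cost equality genuine rather than merely asymptotic: one must confirm that the adjoint contractions $\bar{A}$ and $\bar{B}$ iterate over the identical label set $V_A \cup V_B$ as the forward contraction, with no extra summation indices introduced by the chain rule. A further subtle point is the case where an input tensor feeds several contraction nodes, so that its adjoint is an elementwise sum of contributions; I would need to argue that this accumulation, together with adjoint initialization, contributes only lower-order bookkeeping that does not disturb the clean factor of three.
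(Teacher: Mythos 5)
Your proposal is correct and follows essentially the same route as the paper's proof: decompose the contraction into pairwise steps along the binary tree, observe that each of the two adjoint contractions ranges over the same label set $V_a \cup V_b$ as the forward step and hence costs the same, and sum to get the factor of three. Your final worry about adjoint accumulation is moot here, since in a binary contraction tree each tensor feeds exactly one internal node.
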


\begin{proof}

  Since the program $\mathcal{P}$ is decomposed into a series of pairwise tensor
  contractions, to explain the overall factor of three, it suffices to show
  that for any pairwise tensor contraction, the computation time for
  backward-propagating gradients is twice that of the forward pass. Given a
  pairwise tensor contraction, $\contract(\Lambda, \{A_{V_a}, B_{V_b}\}, V_c)$,
  where $\Lambda = V_a \cup V_b \cup V_c$, its computational cost is
  $\prod_{v \in \Lambda} |\dom{v}|$, where $|\cdot|$ denotes the cardinality
  of a set. The backward rule for pairwise tensor contraction is also a tensor
  contraction. Let the adjoint of the output tensor be $\overline{C} \equiv
  \frac{\partial \mathcal{L}}{\partial C}$, where $\mathcal{L}$ is a scalar
  loss function, the explicit form of which does not need to be known. As
  shown in \Cref{sec:einback}, the backward rule for tensor contraction is
  \begin{equation} \label{eq:einback}
    \begin{split}
      \overline A_{V_a} = \contract(\Lambda, \{\overline{C}_{V_c},
      B_{V_b}\}, V_a)\\
      \overline B_{V_b} = \contract(\Lambda, \{A_{V_a},
      \overline{C}_{V_c}\}, V_b).
    \end{split}
  \end{equation}

  Since the above tensor networks share the same set of unique variables,
  their computing time is roughly equal to that of the forward computation.
  Consequently, the reverse-mode automatic differentiation for a tensor
  network is approximately three times more costly than computing only the
  forward pass, thus proving the theorem.

\end{proof}

\subsection{Most Probable Explanation (MPE)} \label{sec:mpe}

Consider the probabilistic model given by the tensor network $p(\Lambda) =
(\Lambda, \mathcal{T}, \Lambda)$, and suppose we are given evidence $\ve$,
where $E \subseteq \Lambda$. The objective of the Most Probable Explanation
(MPE) estimate is to determine the most likely assignment $q$ for the
variables $Q \in \Lambda \setminus E$. Mathematically, this can be expressed
as:
\begin{align*}
  \mathrm{MPE}(\ve) = \argmax_{q \in \dom{\Lambda \setminus E}} p(\vq, \ve),
\end{align*}
where the goal is not only to find the most likely assignment $\vq^*$ but also
to calculate its corresponding probability. In the subsequent discussion, we
will transition the tensor elements from real positive numbers to max-plus
numbers and reformulate the configuration extraction problem within the
context of differential programming.

\subsubsection{Tropical tensor networks}

Tropical algebra, a non-standard algebraic system, diverges from classical
algebra by replacing the standard operations of addition and multiplication
with different binary operations. In the following discussion, we focus on the
max-plus tropical algebra, a variant where the operations are \emph{maximum}
for addition and \emph{plus} for multiplication. 

\begin{definition}[Tropical Tensor Network]
  A \textit{tropical tensor network}~\cite{liu2021tropical} is a tensor
  network with max-plus tropical numbers as its tensor elements. Given two
  max-plus tropical numbers $a, b\in \mathbb{R} \cup \{-\infty\}$, their
  addition and multiplication operations are defined as
  \begin{align}
    \begin{split}  \label{eq:tropical}
      a \oplus b &= {\rm max}(a, b),\\
      a \odot b &= a + b.
    \end{split}
  \end{align}
  Correspondingly, the \emph{zero} element (or the additive identity) is
  mapped to $-\infty$, and the \emph{one} element (or the multiplicative
  identity) is mapped to $0$. Following from \Cref{eq:tropical} and
  \Cref{def:tnet}, the \textbf{tropical contraction} applied to a tropical
  tensor network $(\Lambda, \mathcal{T}, V_0)$ is defined as
  \begin{equation}
    \tropicalcontract(\Lambda, \mathcal{T}, V_0) =
    \max_{q \in \dom{\Lambda \setminus V_0}} \sum_{T_V \in \mathcal{T}} \slice{T}{V}{Q=q}.
  \end{equation}
  The $\max$ operation runs over all possible configurations over the set of
  variables absent in the output tensor.
\end{definition}

Given some evidence $\ve$, let us denote the MPE as $\vq^* \in \dom{\Lambda
\setminus E}$. The log probability of this MPE estimate can be computed as
follows:
\begin{equation}\label{eq:log-map}
  \begin{split}
    \log p(\vq^*, \ve) 
        &= \max_{q \in \dom{\Lambda\setminus E}}\log p(\vq, \ve),\\
        &= \max_{q \in \dom{\Lambda\setminus E}}\log \prod_{T_V\in\mathcal{T}}
        \slice{T}{V}{(Q=q, E=e)},\\ &= \max_{q \in \dom{\Lambda\setminus E}}\sum_{T_V\in\mathcal{T}}\log \slice{T}{V}{(Q=q, E=e)},\\
        &=\tropicalcontract(\Lambda \setminus E, \log(\mathcal{T})_{\ve}, \emptyset),
  \end{split}
\end{equation}
where $\log(\mathcal{T}) \equiv \{\log(T_V) \mid T_V \in \mathcal{T}\}$
represents the application of the logarithm operation to each tensor in the set
$\mathcal{T}$. The logarithm operation applied to a tensor is defined as
taking the logarithm of each element within the tensor.

\subsubsection{The most probable configuration}

In the context of the MPE estimate, the primary interest often lies not in
acquiring the log-probability of the MPE, calculated
according to \Cref{eq:log-map}, but rather in obtaining the configuration of
$\vq^*$ itself. The algorithm for this purpose is summarized as follows:
\begin{enumerate}
  \item For each variable $v \in \Lambda$, add a unity tensor $\mathbbm{1}_v$
    that is associated with $v$ to the tensor network. The augmented tensor
    network is given by:
    \begin{equation}
        \mathcal{T}_{\rm aug} \leftarrow \mathcal{T}
        \cup \{\mathbbm{1}_{\{v\}} \mid v \in \Lambda \setminus E\},
    \end{equation}
    Once more, we emphasize that introducing unity tensors does not change the
    contraction result of the tensor network.
  \item Evaluate the log-probability of the MPE, given by:
    \begin{multline}
      \log p(\vq^*, \ve) = \\
      \tropicalcontract(\Lambda \setminus E, \log(\mathcal{T}_{\rm aug})_{\ve}, \emptyset),
    \end{multline}
    where $\vq^* \in \dom{\Lambda\setminus E}$ is the MPE.
    Intermediate contraction results are cached for future use.
  \item Back-propagate through the contraction process outlined in Step~2 to
    obtain the gradients for each log-unity vector:
    \begin{equation}
      \begin{split}
        \mathcal{G} = \left\{\frac{\partial \log(p(\vq^*, \ve))}{\partial \log(\mathbb{I}_{\{v\}})} \;\middle|\; v \in \Lambda
      \right\}
      \end{split}
    \end{equation}
    Following a specific convention, we ensure that for each $G_{v} \in
    \mathcal{G}$, there exists exactly one non-zero entry, denoted as
    $G_v(q_v^*) = 1$. This unique entry $q_v^*$ corresponds to the assignment
    of variable $v$ in the MPE solution.
\end{enumerate}
Steps~1 and 2 of this approach mirror their counterparts in the algorithm
detailed in \Cref{sec:mar} for computing marginal probabilities, with the
notable distinction that tensor elements are now represented as tropical
numbers. Step~3 follows from the observation that, although the introduced
log-unity tensors (or zero tensors) do not affect the contraction result of
the tropical tensor network, differentiating the contraction result with
respect to these tensors yields a gradient signal at $\vq^*$.

\subsubsection{Back-propagation in tropical tensor networks}

Obtaining the MPE configuration using back-propagation through a tropical
tensor network is a non-trivial task, especially when there are multiple
configurations with the same maximum probability. In such cases, the gradient
signal must be designed to keep only one of the configurations. To achieve
this, we use a Boolean mask to represent the gradient signal of a tensor, with
its elements being either 0 or 1. In the following, instead of deriving the
exact back-propagation rule, we present a backward rule that only works for
Boolean gradients, which is sufficient for the MPE task.

\begin{theorem}
  Given a pairwise contraction of two tropical tensors
  $\tropicalcontract(\Lambda, \{A_{V_a}, B_{V_b}\}, V_c)$, where $\Lambda =
  V_a \cup V_b \cup V_c$, the backward rule, used for computing masks for nonzero gradients, is
  defined as follows:
\begin{equation} \label{eq:tropical-back}
  \begin{split}
    \overline A_{V_a} = \delta(A_{V_a}, \tropicalcontract(\Lambda, \{C_{V_c}^{-1}\overline{C}_{V_c}, B_{V_b}\}, V_a)^{-1})\\
    \overline B_{V_b} = \delta(B_{V_b}, \tropicalcontract(\Lambda, \{A_{V_a}, C_{V_c}^{-1}\overline{C}_{V_c}\}, V_b)^{-1})
  \end{split}
\end{equation}
\end{theorem}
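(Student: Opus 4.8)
The plan is to verify the rule by unfolding the definition of the tropical contraction and characterizing, entry by entry, which components of $A_{V_a}$ and $B_{V_b}$ carry a nonzero gradient. Throughout I read $\delta(x,y)$ as the indicator returning $1$ when $x=y$ and $0$ otherwise, the superscript $(\cdot)^{-1}$ as the tropical multiplicative inverse (ordinary negation), and the Boolean adjoint $\overline{C}_{V_c}$ as a selector supported on the \emph{active} output entries, so that the tropical product $C_{V_c}^{-1}\overline{C}_{V_c}$ retains $-C_{V_c}$ on those entries and equals the tropical zero $-\infty$ off them.

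First I would fix the forward pass. For each output index $c$, the definition of $\tropicalcontract$ gives $C_{V_c}(c)=\max_{r}\big(A_{V_a}(a)+B_{V_b}(b)\big)$, where $r$ ranges over assignments of the contracted variables $\Lambda\setminus V_c$ and $a,b$ are the indices induced on $A$ and $B$ by the joint assignment $(r,c)$. The structural fact I would record is the optimality inequality $A_{V_a}(a)+B_{V_b}(b)\le C_{V_c}(c)$ for every mutually consistent triple $(a,b,c)$, with equality exactly on the maximizers. This single inequality drives the whole argument.

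Next I would make the target precise: an entry $A_{V_a}(a)$ should receive $\overline{A}_{V_a}(a)=1$ exactly when it lies on an optimal configuration reaching an active output, i.e.\ when there are consistent $b,c$ with $c$ active and $A_{V_a}(a)+B_{V_b}(b)=C_{V_c}(c)$. I would then evaluate the inner object of \Cref{eq:tropical-back}. Because $C_{V_c}^{-1}\overline{C}_{V_c}$ is $-\infty$ on inactive outputs, those terms are annihilated inside the $\max$; contracting the remaining $-C_{V_c}$ against $B_{V_b}$ and applying the outer tropical inverse converts a maximum of negatives into a minimum, yielding $\min_{r'}\big(C_{V_c}(c)-B_{V_b}(b)\big)$, where $r'$ ranges over $\Lambda\setminus V_a$ and the inactive outputs have already dropped out. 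Feeding in the optimality inequality shows this quantity is $\ge A_{V_a}(a)$, with equality iff $A_{V_a}(a)$ participates in some optimal, active configuration. Hence $\delta\!\big(A_{V_a},\,\cdot\big)$ returns $1$ on precisely the target set, establishing the rule for $\overline{A}_{V_a}$; the rule for $\overline{B}_{V_b}$ follows by the symmetry of the contraction in $A$ and $B$.

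The main obstacle I anticipate is the bookkeeping around degeneracy and the $-\infty$ entries rather than the core inequality. When several configurations attain the same maximum the $\delta$-comparison flags every maximizer, so I must confirm that the output is still a legitimate nonzero-gradient mask and reconcile it with the downstream convention of the MPE algorithm that ultimately retains a single configuration. I also have to check the boundary cases where an output is unreachable, $C_{V_c}(c)=-\infty$, so that its inverse $+\infty$ drops out of the $\min$ without corrupting neighbouring entries, and to verify that normalizing by $C_{V_c}^{-1}$ is exactly what aligns the resulting $\min$ with $A_{V_a}$ on the nose; that alignment is the crux of the verification.
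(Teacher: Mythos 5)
Your proof is correct, but it takes a genuinely different route from the paper. The paper does not derive the rule from first principles: it reduces the general pairwise contraction to tropical matrix multiplication $C=AB$, cites the previously derived back-propagation rule for that case (Liu and Wang, \emph{Computing solution space properties\ldots}), and restates the matrix form of \Cref{eq:tropical-back} for completeness. You instead give a self-contained derivation directly at the level of the general contraction: you extract the optimality inequality $A_{V_a}(a)+B_{V_b}(b)\le C_{V_c}(c)$ for consistent triples (with equality exactly on maximizers), observe that $C_{V_c}^{-1}\overline{C}_{V_c}$ annihilates inactive outputs and carries $-C_{V_c}$ on active ones, and show that the tropically inverted contraction equals $\min_{r'}\bigl(C_{V_c}(c)-B_{V_b}(b)\bigr)\ge A_{V_a}(a)$ with equality precisely when $a$ lies on an optimal configuration reaching an active output — which is exactly the nonzero-gradient set that the max-plus subgradient semantics demands. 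Your approach buys a proof that needs no external reference and makes transparent \emph{why} the normalization by $C_{V_c}^{-1}$ is what aligns the comparison with $A_{V_a}$; the paper's approach buys brevity and a direct link to existing tropical BLAS machinery. Your closing caveat about degeneracy is well placed and is also left unresolved by the paper's proof: the rule as stated flags \emph{all} maximizers, and the reduction to a single configuration (promised in the surrounding text for the MPE task) is a convention imposed downstream rather than a consequence of \Cref{eq:tropical-back} itself; similarly, the $C_{V_c}(c)=-\infty$ boundary case is handled correctly by your $\min$ formulation since $+\infty$ terms drop out.
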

\begin{proof}
  This rule is provable by reducing the tensor network contraction to tropical
  matrix multiplication $C = AB$. The back-propagation rule for tropical
  matrix multiplication has been previously derived
  in~\cite{liu2022computing}. Here, we revisit the main results for
  completeness. We require the gradients to be either $0$ or $1$. This binary
  nature aligns with the representation of configurations using onehot
  vectors. Consequently, a Boolean mask can effectively be employed to extract
  or represent any given configuration in this context. The gradient mask for
  $C$ is denoted as $\overline{C}$. The back-propagation rule for these
  gradient masks is:
  \begin{equation}\label{eq:adrule}
    \overline{A}_{ij} = \delta \left(A_{ij}, \left( \left( C^{\circ-1} \circ \overline{C} \right) B^{\mathsf{T}} \right)_{ij}^{\circ -1} \right),
  \end{equation}
  where $\delta$ is the Dirac delta function, returning one for equal
  arguments and zero otherwise. The notation $\circ$ signifies the
  element-wise product, and ${}^{\circ-1}$ indicates the element-wise inverse.
  Boolean \emph{false} is equated with the tropical zero ($-\infty$), and Boolean
  \emph{true} is the tropical one ($0$).
\end{proof}

In \Cref{eq:tropical-back}, the right-hand side primarily involves tropical
tensor contractions, which can be efficiently handled using fast tropical BLAS
routines~\cite{liu2023tropical}. Notably, the backward rule's computing time
mirrors that of the forward pass.

\subsection{Maximum Marginal a Posteriori (MMAP)} \label{sec:mmap}

Tensor networks are equally applicable in the context of computing
\emph{maximum marginal a posteriori} (MMAP) estimations. This task involves
computing the most likely assignment for the query variables, after
marginalizing out the remaining variables. Consider a scenario with evidence
$e$ observed over variables $E\subseteq \Lambda$ and query variables
$Q\subseteq \Lambda$, such that $E \cap Q = \emptyset$. Mathematically, the
MMAP solution is given by
\begin{multline} \label{eq:mmap}
  {\rm MMAP}(Q \mid \ve) = \\
    \argmax_{q \in \dom{Q}} \sum_{m \in \dom{\Lambda\setminus (Q, E)}} p(\vq, \vm, \ve).
\end{multline}
Upon closer examination of \Cref{eq:mmap}, it is clear that the equation
combines elements of both max-sum and sum-product networks. To optimize the
computation process, we utilize a routine that divides the computation into
two separate phases: conventional tensor network contraction and tropical
tensor network contraction. The process to compute MMAP solutions using tensor
networks is described as follows:
\begin{enumerate}
  \item Find a partition $\hat{\mathcal{S}}$ of $\mathcal{T}$ such that, for
    each marginalized variable $v \in \Lambda\setminus (Q\cup E)$, there
    exists an $\mathcal{S}_i \in \hat{\mathcal{S}}$ that contains all tensors
    associated with it.
  \item For each $\mathcal{S}_i \in \hat{\mathcal{S}}$, marginalize the
    variables in $\Lambda\setminus (Q \cup E)$ by contracting the tensor
    network
    \begin{equation}
      S_{\Lambda_i \cap Q} = \contract(\Lambda_i, (\mathcal{S}_i)_{\ve}, \Lambda_i\cap Q),
    \end{equation}
    where $\Lambda_i$ is the set of variables involved in $\mathcal{S}_i$.

  \item Solve the MPE problem on the probability model specified by tensor
    network $(Q, \{S_{\Lambda_1 \cap Q}, \ldots,
    S_{\Lambda_{|\hat{\mathcal{S}}|} \cap Q}\}, \emptyset)$, the result
    corresponds to the solution of the MMAP problem.

\end{enumerate}
In Step~1, The sets in $\hat{\mathcal{S}}$ can be constructed by first
choosing a marginalized variable $v \in \Lambda \setminus (Q\cup E)$, and then
greedily including tensors containing $v$ into the set.

\subsection{Generative modeling} \label{sec:sampling}

In the following discussion, we examine the use of tensor networks for
sampling from learned distributions in the context of probabilistic modeling.
This section aims to connect the contributions of this paper with other works
in the domain of tensor-based generative modeling, notably those by Han
et~al.~\cite{han2018unsupervised} and Cheng et~al.~\cite{cheng2019tree}. We
introduce a generic framework for unbiased variable sampling that generalizes
the sampling algorithms of these references. The algorithm for generating an
unbiased sample is summarized as follows:
\begin{enumerate}
  \item Contract the tensor network to obtain the partition function $p$. The
    contraction is done in a specified pairwise order, and intermediate
    results are cached for later use.
  \item Initialize a sample $\vs_{\emptyset}$ over an empty set of variables
    $\emptyset$.
  \item Trace back the contraction process and update the sample using the
    backward sampling rule in \Cref{eq:sample,eq:condition-prob}. Consider a
    pairwise tensor contraction $C_{Z} = \contract(X \cup Y, \{A_{X}, B_{Y}\}, Z)$,
    where $X, Y \subseteq \Lambda$ are the sets of variables involved in
    tensors $A$ and $B$, respectively, and $Z \subseteq X \cup Y$ are those
    involved in the output tensor $C$. The backward rule for generating
    samples entails obtaining an unbiased sample over the
    variables $X\cup Y$ given an unbiased sample over the variables in $Z$.
    This involves generating a sample $\vs_{X\cup Y} \sim p(X, Y \mid \vs_Z)$
    given another sample $\vs_Z \sim p(Z)$.
  \item Repeat Step~3 until all variables are sampled.
\end{enumerate}
In the following, we provide a detailed explanation of the backward sampling
rule in Step~3 of the algorithm. Let us first denote the set of variables
removed during the pairwise tensor contraction, $M = (X \cup Y) \setminus Z$, as
\emph{eliminated variables}. Let $\vs_Z \in \dom{Z}$ be an unbiased sample
over the variables in $Z$, i.e., $\vs_Z \sim p(Z)$. An unbiased sample
$\vs_{X \cup Y} \sim p(X, Y)$ can be obtained by first generating a sample
over the eliminated variables
\begin{equation}\label{eq:sample}
  \begin{split}
    \vs_{M} &\sim p(M \mid \vs_Z),
  \end{split}
\end{equation}
and then concatenating it with $\vs_Z$. The conditional probability can be
computed using Bayes' rule~\cite{joyce2021stanford} as
\begin{equation} \label{eq:condition-prob}
  \begin{split}
    p(M \mid \vs_Z) &= \frac{p(M, \vs_Z)}{p(\vs_Z)}\\
                    &= \frac{\contract(M, \{A_{X}, B_{Y}\}_{Z=\vs_Z}, {M})}{C_{\vs_Z}}.
  \end{split}
\end{equation}
In a valid tensor network contraction sequence, each variable is eliminated at
most once, preventing repeated sampling of any variable. In what follows, we
prove the correctness of \Cref{eq:condition-prob}.

\begin{proof}

  Let us first divide the tensors in $\mathcal{T}$ into three parts: those
  generating $A_{X}$, those generating $B_{Y}$, and the remaining part,
  $\mathcal{R}$. 
  By contracting the sub-tensor-networks associated with each of these three
  parts, three tensors $A_X$, $B_Y$, and $R_Z$ are
  obtained. Contracting these three tensors yields the partition function $p$
  as illustrated in \Cref{fig:threeparts}.

  \begin{figure}[th]
    \begin{tikzpicture}[
      mytensor/.style={
        circle,
        thick,
        draw=black!100,
        font=\small,
        minimum size=0.8cm
      },
      myedge/.style={
        line width=0.80pt,
      },
      ]
      \node (a) at (-1, 0) [mytensor] {$A_X$};
      \node (a1) at (-1.7, -0.7) {};
      \node (a2) at (-0.3, -0.7) {};
      \node at (-1, -0.6) {$\ldots$};
      \node (b) at (1, 0) [mytensor] {$B_Y$};
      \node (b1) at (0.3, -0.7) {};
      \node (b2) at (1.7, -0.7) {};
      \node at (1, -0.6) {$\ldots$};
      \node (c) at (0, 1) [mytensor] {$C_Z$};
      \node (r) at (2, 1) [mytensor] {$R_Z$};
      \node (r1) at (1.3, 0.3) {};
      \node (r2) at (2.7, 0.3) {};
      \node at (2, 0.4) {$\ldots$};
      \node (p) at (1, 2) [mytensor] {$p$};
      % parallel
      \draw [myedge] (a) edge node[below] {} (c);
      \draw [myedge] (b) edge node[below] {} (c);
      \draw [myedge] (c) edge node[below] {} (p);
      \draw [myedge] (r) edge node[below] {} (p);
      \draw [myedge] (a1) edge node[below] {} (a);
      \draw [myedge] (a2) edge node[below] {} (a);
      \draw [myedge] (b1) edge node[below] {} (b);
      \draw [myedge] (b2) edge node[below] {} (b);
      \draw [myedge] (r1) edge node[below] {} (r);
      \draw [myedge] (r2) edge node[below] {} (r);
    \end{tikzpicture}
    \caption{
      The contraction of a tensor network with three parts, $A_X$, $B_Y$, and
      $R_Z$.
    }
    \label{fig:threeparts}
  \end{figure}
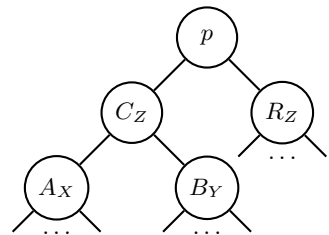

  %Let the set of variables eliminated during computing $A_{X}$ and
  %$B_{Y}$ as $\Lambda'$.
  %Here we emphasize that variables in $\Lambda' \cup ((X\cup Y)\setminus
  %Z)$ can not be involved in computing $R_{Z}$, otherwise, these
  %variables will be eliminated at least twice, contradicting the
  %assumption of valid contraction order.
  The marginal probabilities $p({X, Y})$ and $p(Z)$ can be obtained by by
  setting the output label to $X\cup Y$ and $Z$, respectively. This yields the
  following equations:
  \begin{equation} \label{eq:pz}
    \begin{split}
      p({X, Y}) &= \contract(X \cup Y, \{A_{X},
      B_{Y}, R_{Z}\}, {X \cup Y}),\\ p(Z) &= \contract(Z, \{C_{Z}, R_{Z}\}, Z).
    \end{split}
  \end{equation}
  By assigning $\vs_Z$ to $Z$, we obtain
  \begin{align}
    p(M, \vs_Z) &= \contract(M,  \{A_{X}, B_{Y}\}_{Z=\vs_Z}, M)R_{\vs_Z}, \label{eq:first-equation}\\
    p(\vs_Z) &= C_{\vs_Z}R_{\vs_Z}. \label{eq:second-equation}
  \end{align}
  By dividing \Cref{eq:first-equation} by \Cref{eq:second-equation}, we can
  readily obtain \Cref{eq:condition-prob}.

  % The generated sample $\vs_{A \cup B}$ can be further split into two samples
  % $\vs_A \sim p(X)$ and $\vs_B \sim p(Y)$. By propagating the sampling process
  % backwards into the contraction subtrees of $A$ and $B$, we obtain an unbiased
  % sample on $\Lambda$.

\end{proof}

This sampling algorithm is a natural generalization of those used in the
quantum-inspired probabilistic models, such as the matrix product state
ansatz~\cite{han2018unsupervised} and the tree tensor network
ansatz~\cite{cheng2019tree}. In quantum-inspired models, the involved tensors
are complex-valued and probabilities are represented using Born's rule. Born's
rule corresponds to contracting the complex tensor network with its conjugate,
as demonstrated in \Cref{fig:mpstree}. For example, in \Cref{subfig:mps}, the
contraction order of a matrix product state ansatz is from left to right, as
indicated by the dashed line. The variables are sampled in the reverse order
of elimination, i.e.,~from right to left. In a quantum-inspired ansatz, only
the ``physical'' variables (red edges) are sampled, while the ``virtual''
variables (black edges) are marginalized out. The tree tensor network ansatz,
shown in \Cref{sub@subfig:ttn}, is similar to the matrix product state ansatz
but features a different tensor network structure.

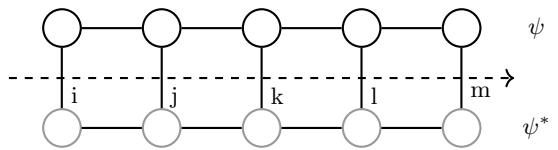
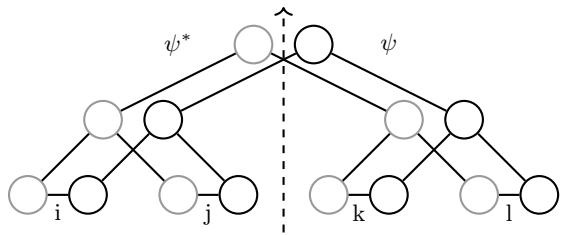
\begin{figure}
  \centering
  \begin{subfigure}[b]{\columnwidth}
    \begin{tikzpicture}[
      mytensor/.style={
        circle,
        thick,
        draw=black!100,
        font=\small,
        minimum size=0.5cm
      },
      myedge/.style={
        line width=0.80pt,
      },
      ]
      \matrix[row sep=0.8cm,column sep=0.8cm,ampersand replacement= \& ] {
        \node (a) [mytensor] {};                                 \&
        \node (b) [mytensor] {};                                 \&
        \node (c) [mytensor] {};                                 \&
        \node (d) [mytensor] {};                                 \&
        \node (e) [mytensor] {};                                 \&\\
        \node (1) [mytensor, draw=black!040] {};                                 \&
        \node (2) [mytensor, draw=black!040] {};                                 \&
        \node (3) [mytensor, draw=black!040] {};                                 \&
        \node (4) [mytensor, draw=black!040] {};                                 \&
        \node (5) [mytensor, draw=black!040] {};                                 \&
                                                                    \\
      };
      \node[right of = e] (t) {$\psi$};
      \node[right of = 5] (s) {$\psi^*$};
      % parallel
      \draw [myedge, draw=black!040] (1) edge node[below] {} (2);
      \draw [myedge, draw=black!040] (2) edge node[below] {} (3);
      \draw [myedge, draw=black!040] (3) edge node[below] {} (4);
      \draw [myedge, draw=black!040] (4) edge node[below] {} (5);
      \draw [myedge] (a) edge node[below] {} (b);
      \draw [myedge] (b) edge node[below] {} (c);
      \draw [myedge] (c) edge node[below] {} (d);
      \draw [myedge] (d) edge node[below] {} (e);
      
      \draw [myedge,draw=c02] (1) edge node[below right] {i} coordinate[midway](a1) (a);
      \draw [myedge,draw=c02] (2) edge node[below right] {j} (b);
      \draw [myedge,draw=c02] (3) edge node[below right] {k} (c);
      \draw [myedge,draw=c02] (4) edge node[below right] {l} (d);
      \draw [myedge,draw=c02] (5) edge node[below right] {m} coordinate[midway](e5) (e);
      
      \draw [->, myedge, dashed] ([xshift=-20pt]a1) -- ([xshift=20pt]e5);
    \end{tikzpicture}
    \caption{
      Probabilistic interpretation of a matrix product state~(MPS) tensor
      network.
    }
    \label{subfig:mps} 
  \end{subfigure}
  \par\bigskip % force a bit of vertical whitespace
  \begin{subfigure}[b]{\columnwidth}
    \begin{tikzpicture}[
      mytensor/.style={
        circle,
        thick,
        draw=black!100,
        font=\small,
        minimum size=0.5cm
      },
      myedge/.style={
        line width=0.80pt,
      },
      ]
      \node (a) [mytensor] at (0, 0) {};
      \node (b) [mytensor] at (-2, -1) {};
      \node (c) [mytensor] at (2, -1) {};
      \node (d) [mytensor] at (-3, -2) {};
      \node (e) [mytensor] at (-1, -2) {};
      \node (f) [mytensor] at (1, -2) {};
      \node (g) [mytensor] at (3, -2) {};
      \node (1) [mytensor, draw=black!040] at (-0.8, 0) {};
      \node (2) [mytensor, draw=black!040] at (-2.8, -1) {};
      \node (3) [mytensor, draw=black!040] at (1.2, -1) {};
      \node (4) [mytensor, draw=black!040] at (-3.8, -2) {};
      \node (5) [mytensor, draw=black!040] at (-1.8, -2) {};
      \node (6) [mytensor, draw=black!040] at (0.2, -2) {};
      \node (7) [mytensor, draw=black!040] at (2.2, -2) {};
      % parallel
      \draw [myedge, draw=black!040] (1) edge node[below] {} (2);
      \draw [myedge, draw=black!040] (1) edge node[below] {} (3);
      \draw [myedge, draw=black!040] (2) edge node[below] {} (4);
      \draw [myedge, draw=black!040] (2) edge node[below] {} (5);
      \draw [myedge, draw=black!040] (3) edge node[below] {} (6);
      \draw [myedge, draw=black!040] (3) edge node[below] {} (7);
      \draw [myedge] (a) edge node[below] {} (b);
      \draw [myedge] (a) edge node[below] {} (c);
      \draw [myedge] (b) edge node[below] {} (d);
      \draw [myedge] (b) edge node[below] {} (e);
      \draw [myedge] (c) edge node[below] {} (f);
      \draw [myedge] (c) edge node[below] {} (g);
      
      \draw [myedge,draw=c02] (4) edge node[below] {i} (d);
      \draw [myedge,draw=c02] (5) edge node[below] {j} (e);
      \draw [myedge,draw=c02] (6) edge node[below] {k} (f);
      \draw [myedge,draw=c02] (7) edge node[below] {l} (g);
      
      \draw [->, myedge, dashed] (-0.4, -2.5) -- (-0.4, 0.5);
      \node[right of = a] (t) {$\psi$};
      \node[left of = 1] (s) {$\psi^*$};
    \end{tikzpicture}
    \caption{
      Probabilistic interpretation of a tree tensor network~(TTN).
    }
    \label{subfig:ttn}
  \end{subfigure}
  \caption{
    Probabilistic interpretation of popular tensor networks. Dashed arrows
    denote the variable elimination order. Red edges correspond to the variables
    of interest. The set of gray tensors is the complex conjugate of the black
    tensors.
  }
  \label{fig:mpstree}
\end{figure}

\section{Performance benchmarks} \label{sec:benchmarks}

This section presents a series of performance benchmarks comparing the runtime
of our tensor-based probabilistic inference library, namely
\emph{TensorInference.jl}~\cite{roa2023tensor}, against that of other
established solvers for probabilistic inference. We have selected two
open-source libraries written in C++ for this purpose, namely the
\emph{Merlin}~\cite{marinescu2022merlin} and
\emph{libDAI}~\cite{mooij2010libdai} solvers. Their positive results in past
UAI inference competitions~\cite{uai2010summary,vibhav2014uai} make them
representative examples of standard practices in the field. Additionally, we
have included \emph{JunctionTrees.jl}~\cite{roa2022partial}, an open-source
library written in Julia and the predecessor of \emph{TensorInference.jl}. The
inference tasks supported by the libraries used in the benchmark are
summarized in~\Cref{table:libsupport}.

\begin{table}[ht]
  \centering
  \begin{tabular}{lcccc} 
    \toprule
     & PR & MAR & MPE & MMAP \\ [0.5ex] 
     \midrule
    \emph{TensorInference.jl} & \checkmark & \checkmark & \checkmark & \checkmark \\
    \emph{Merlin}             & \checkmark & \checkmark & \checkmark & \checkmark \\
    \emph{libDAI}             & \checkmark & \checkmark & \checkmark & $\times$   \\
    \emph{JunctionTrees.jl}   & $\times$   & \checkmark & $\times$   & $\times$   \\ [1ex]
    \bottomrule
  \end{tabular}
  \caption{
    The inference tasks supported by the libraries used in the benchmark. See
    \Cref{sec:tensor-network-for-probabilistic-modeling} for descriptions of
    these tasks.
  }
  \label{table:libsupport}
\end{table}

In these experiments, we used the UAI 2014 inference competition's benchmark
suite, which comprises problem sets from various domains, including computer
vision, signal processing, and medical diagnosis. These benchmark problems
serve as a standardized testbed for algorithms dealing with uncertainty in AI.
For the PR, MAR, and MPE tasks, we used the UAI 2014 \emph{MAR} problem sets,
as they are suitable for exact inference tasks. On the other hand, we used the
UAI 2014 \emph{MMAP} problem sets for the MMAP task, as these contain specific
sets of query variables required for such task. However, since the \emph{MMAP}
problem sets were designed for approximate algorithms, we were unable to solve
some of these problems using our exact inference methods. For the CPU
experiments, we conducted benchmarks for all four tasks. For the GPU
experiments, we focused only on benchmarking the MMAP task, since the problems
of the other tasks in the UAI 2014 benchmark suite are either too large for
exact inference or too small to benefit from GPU acceleration. The CPU
experiments were conducted on an AMD Ryzen Threadripper PRO 3995WX 64-Cores
Processor operating at 3.7GHz and equipped with 256GiB of RAM. The GPU
experiments were conducted on an NVIDIA Quadro RTX 8000 with 48~GiB of VRAM.

\begin{figure*}[th]
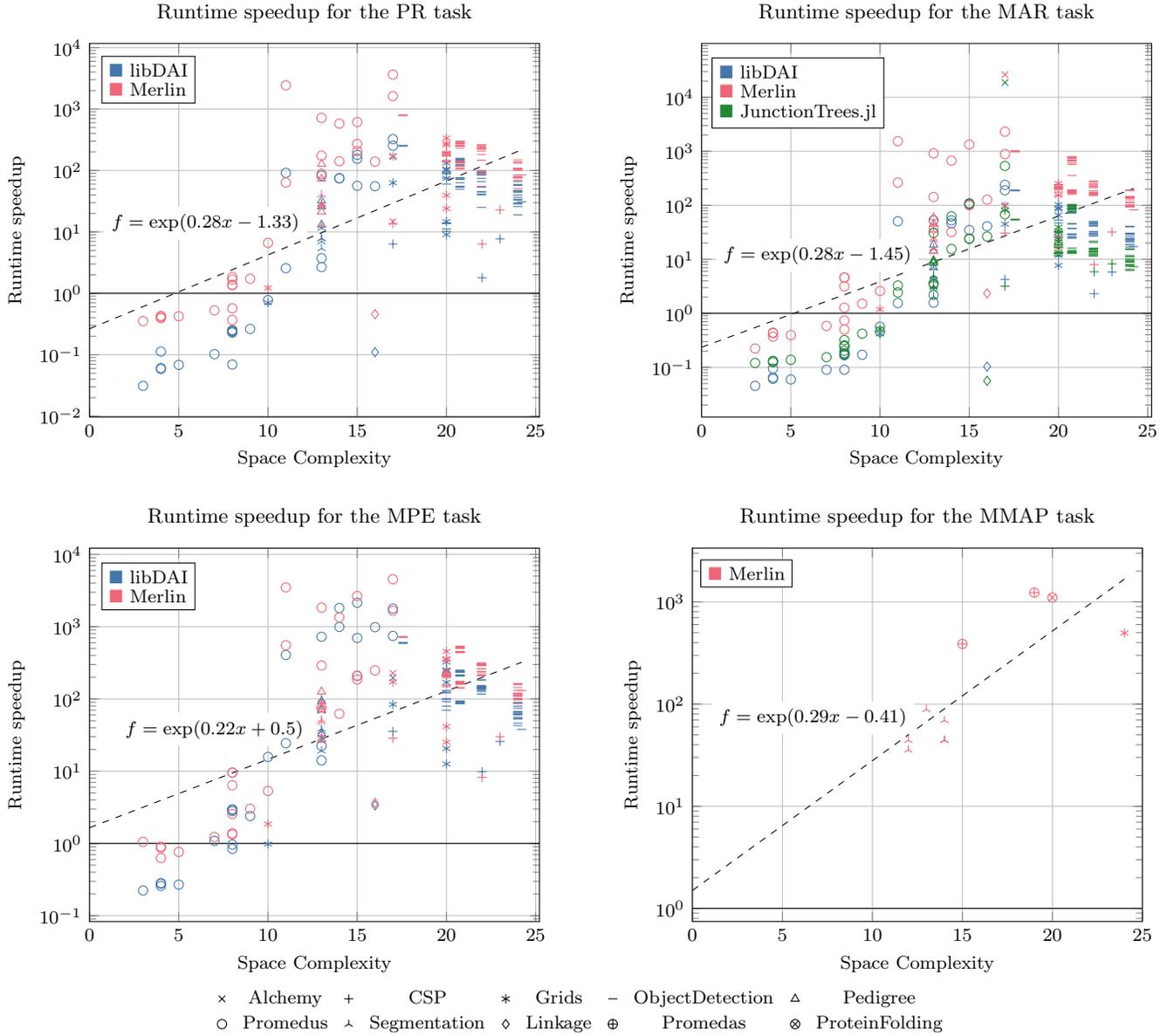

  %\captionsetup[subfigure]{font=small} if you like to change caption style
  \begin{subfigure}[b]{0.48\textwidth}
    \centering
    \input{scripts/benchmarks/pr_mar_mpe/out/ganzecheng-amat/2024-03-09--16-02-24/pr-ti_vs_merlin_vs_libdai.tex}
    \label{fig:pr-ti-vs-merlin-vs-libdai-vs-jt}
  \end{subfigure}
  \hfill
  \begin{subfigure}[b]{0.48\textwidth}
    \centering
    \input{scripts/benchmarks/pr_mar_mpe/out/ganzecheng-amat/2024-03-10--20-26-05/mar-ti_vs_merlin_vs_libdai.tex}
    \label{fig:mar-ti-vs-merlin-vs-libdai-vs-jt}
  \end{subfigure}

  \vskip\baselineskip
  \begin{subfigure}[b]{0.48\textwidth}
    \centering
    \input{scripts/benchmarks/pr_mar_mpe/out/ganzecheng-amat/2024-03-09--16-02-24/mpe-ti_vs_merlin_vs_libdai.tex}
    \label{fig:mpe-ti-vs-merlin-vs-libdai-vs-jt}
  \end{subfigure}
  \hfill
  \begin{subfigure}[b]{0.48\textwidth}
    \centering
    % Recommended preamble:
\begin{tikzpicture}
\begin{axis}[title={Runtime speedup for the MMAP task}, xmin={0}, xmax={25.0}, xlabel={Space Complexity}, xmajorgrids={true}, ymode={log}, ymajorgrids={true}, ylabel={Runtime speedup}, label style={font={\footnotesize}}, tick label style={font={\footnotesize}}, scatter/classes={Grids={mark={asterisk}}, Promedas={mark={oplus}}, ProteinFolding={mark={otimes}}, Segmentation={mark={Mercedes star}}}, legend style={legend columns={3}, at={(0.51,1.0)
}, anchor={south}, draw={none}, font={\footnotesize}, column sep={1.5}}]
    \addplot[c02, scatter, only marks, scatter src={explicit symbolic}, legend image post style={black}, legend style={text={black}, font={\footnotesize}}]
        table[row sep={\\}, meta={label}]
        {
            x  y  label  \\
            12.0  35.5646996632968  Segmentation
              \\
            12.0  44.174676232832255  Segmentation
              \\
            13.0  88.63474799024657  Segmentation
              \\
            14.0  44.074664302918976  Segmentation
              \\
            14.0  69.22491979986624  Segmentation
              \\
            14.0  44.39010117906956  Segmentation
              \\
            15.0  387.6847713904313  Promedas
              \\
            19.0  1231.4559806288432  Promedas
              \\
            20.0  1103.02014242042  ProteinFolding
              \\
            24.0  496.4986330768476  Grids
              \\
        }
        ;
    \addplot[black, no markers, dashed]
        table[row sep={\\}]
        {
            x  y  \\
            0.0  1.5011850614945426  \\
            0.24242424242424243  1.6114099470120709  \\
            0.48484848484848486  1.7297281220906187  \\
            0.7272727272727273  1.856733838523789  \\
            0.9696969696969697  1.9930649811905385  \\
            1.2121212121212122  2.1394062718252913  \\
            1.4545454545454546  2.2964927080256703  \\
            1.696969696969697  2.4651132547702255  \\
            1.9393939393939394  2.646114806986762  \\
            2.1818181818181817  2.8404064430732383  \\
            2.4242424242424243  3.0489639907344843  \\
            2.6666666666666665  3.2728349280666142  \\
            2.909090909090909  3.513143644504785  \\
            3.1515151515151514  3.7710970880573402  \\
            3.393939393939394  4.047990827189527  \\
            3.6363636363636362  4.345215557802523  \\
            3.878787878787879  4.664264087989024  \\
            4.121212121212121  5.006738835646229  \\
            4.363636363636363  5.374359876602926  \\
            4.606060606060606  5.768973583682311  \\
            4.848484848484849  6.192561900090122  \\
            5.090909090909091  6.647252293703609  \\
            5.333333333333333  7.135328442256644  \\
            5.575757575757576  7.6592417030871855  \\
            5.818181818181818  8.221623425053805  \\
            6.0606060606060606  8.825298164457728  \\
            6.303030303030303  9.47329787134726  \\
            6.545454545454546  10.168877117455082  \\
            6.787878787878788  10.91552944225066  \\
            7.03030303030303  11.717004899205616  \\
            7.2727272727272725  12.57732889039792  \\
            7.515151515151516  13.500822384051677  \\
            7.757575757575758  14.492123616554638  \\
            8.0  15.55621138795191  \\
            8.242424242424242  16.698430067916902  \\
            8.484848484848484  17.92451643779202  \\
            8.727272727272727  19.24062850351274  \\
            8.969696969696969  20.65337642412792  \\
            9.212121212121213  22.16985571125425  \\
            9.454545454545455  23.797682866208916  \\
            9.696969696969697  25.54503363380771  \\
            9.93939393939394  27.42068406495749  \\
            10.181818181818182  29.434054594279964  \\
            10.424242424242424  31.59525735414567  \\
            10.666666666666666  33.91514696275287  \\
            10.909090909090908  36.40537504133356  \\
            11.151515151515152  39.078448734298945  \\
            11.393939393939394  41.947793526241085  \\
            11.636363636363637  45.027820671288474  \\
            11.878787878787879  48.33399957347882  \\
            12.121212121212121  51.88293548167853  \\
            12.363636363636363  55.692452889271046  \\
            12.606060606060606  59.781685057487984  \\
            12.848484848484848  64.17117011201306  \\
            13.090909090909092  68.88295419550272  \\
            13.333333333333334  73.94070219410689  \\
            13.575757575757576  79.36981659411099  \\
            13.818181818181818  85.19756506565736  \\
            14.06060606060606  91.45321741433247  \\
            14.303030303030303  98.16819258845827  \\
            14.545454545454545  105.37621648043115  \\
            14.787878787878787  113.11349131466248  \\
            15.030303030303031  121.41887747287136  \\
            15.272727272727273  130.3340886699439  \\
            15.515151515151516  139.903901460629  \\
            15.757575757575758  150.17638012931536  \\
            16.0  161.2031180923954  \\
            16.242424242424242  173.03949702565833  \\
            16.484848484848484  185.74496501817546  \\
            16.727272727272727  199.38333514970506  \\
            16.96969696969697  214.0231059912157  \\
            17.21212121212121  229.73780563824087  \\
            17.454545454545453  246.6063610049672  \\
            17.696969696969695  264.71349423383424  \\
            17.939393939393938  284.15014821160605  \\
            18.181818181818183  305.01394332906574  \\
            18.424242424242426  327.4096677784048  \\
            18.666666666666668  351.4498038508208  \\
            18.90909090909091  377.2550928776429  \\
            19.151515151515152  404.9551416524046  \\
            19.393939393939394  434.6890733796041  \\
            19.636363636363637  466.6062264195406  \\
            19.87878787878788  500.866904338661  \\
            20.12121212121212  537.6431810325442  \\
            20.363636363636363  577.1197649652361  \\
            20.606060606060606  619.4949268655711  \\
            20.848484848484848  664.9814955398325  \\
            21.09090909090909  713.8079268021976  \\
            21.333333333333332  766.2194508916698  \\
            21.575757575757574  822.4793041383805  \\
            21.818181818181817  882.8700510652999  \\
            22.060606060606062  947.6950035655902  \\
            22.303030303030305  1017.2797442834021  \\
            22.545454545454547  1091.9737618493007  \\
            22.78787878787879  1172.1522061832413  \\
            23.03030303030303  1258.217772681111  \\
            23.272727272727273  1350.6027247481272  \\
            23.515151515151516  1449.7710648372647  \\
            23.757575757575758  1556.2208648967057  \\
            24.0  1670.486767930974  \\
        }
        ;
    \draw[solid, black, line width={0.4pt}] ({rel axis cs:1,0}|-{axis cs:0,1}) -- ({rel axis cs:0,0}|-{axis cs:0,1});
    \node 
    [draw={black}, fill={white}, font={\footnotesize}]  at 
    (3.0,1900)
    {\shortstack[l] { $\textcolor{c02}{\blacksquare}$ Merlin }};
    \node 
    [fill={white}, font={\footnotesize}]  at 
    (6.7,74)
    {$f = \exp(0.29x - 0.41)$};
\end{axis}
\end{tikzpicture}
    \label{fig:mmap-ti-vs-merlin}
  \end{subfigure}

  \begin{subfigure}{\textwidth}
    \centering
    \begin{tikzpicture}
  \begin{axis}[
    hide axis,
    height=46px,
    xmin=0,
    xmax=1,
    ymin=0,
    ymax=1,
    legend style={
      legend columns={5},
      at={(0.51,1.0)},
      anchor={south},
      draw={none},
      font={\footnotesize},
      column sep={5.0}
    }
    ]
    % Add legend entries
    \addlegendimage{mark={x}, color=black, only marks}
    \addlegendentry{Alchemy}
    \addlegendimage{mark={+}, color=black, only marks}
    \addlegendentry{CSP}
    \addlegendimage{mark={asterisk}, color=black, only marks}
    \addlegendentry{Grids}
    \addlegendimage{mark={-}, color=black, only marks}
    \addlegendentry{ObjectDetection}
    \addlegendimage{mark={triangle}, color=black, only marks}
    \addlegendentry{Pedigree}
    \addlegendimage{mark={o}, color=black, only marks}
    \addlegendentry{Promedus}
    \addlegendimage{mark={Mercedes star}, color=black, only marks}
    \addlegendentry{Segmentation}
    \addlegendimage{mark={diamond}, color=black, only marks}
    \addlegendentry{Linkage}
    \addlegendimage{mark={oplus}, color=black, only marks}
    \addlegendentry{Promedas}
    \addlegendimage{mark={otimes}, color=black, only marks}
    \addlegendentry{ProteinFolding}

  \end{axis}
\end{tikzpicture}
    \label{fig:performance-benchmark-legend}
  \end{subfigure}

  \caption{
    Runtime speedup achieved by our tensor-based library,
    \emph{TensorInference.jl}, across four different probabilistic inference
    tasks, relative to \emph{Merlin}~\cite{marinescu2022merlin},
    \emph{libDAI}~\cite{mooij2010libdai} and
    \emph{JunctionTrees.jl}~\cite{roa2022partial}. The experiments were
    conducted on a CPU using the UAI 2014 inference competition benchmark
    problems.
  }%

  \label{fig:performance-results}
\end{figure*}

The benchmark results, conducted on a CPU, are presented in
\Cref{fig:performance-results}, where each subfigure displays the results for
each of the considered inference tasks. The benchmark problems are arranged
along the x-axis in ascending order of the network's \emph{space complexity}.
This metric is defined as the logarithm base 2 of the number of elements in
the largest tensor encountered during contraction with a given optimized
contraction order. A common pattern observed among these four benchmark
results is that, as the complexity of the problem increases, our TN-based
implementation progressively outperforms the reference libraries. The
improvement is attributed to the tensor network contraction order algorithm,
which reduces the space complexity, time complexity and the read-write
complexity (the number of read and write operations) of the contraction at the
same time. The graphs feature a fitted linear curve in log space to underscore
this exponential improvement. However, for other less complex problems (those
with space complexities smaller than~$10$), our library generally performs
slower than the reference libraries. The reason is that the hyper-optimized
contraction order-finding algorithms in our library incur a cost that becomes
non-negligible for small-sized problems.

\Cref{fig:mmap-gpu-vs-cpu} demonstrates the speedups achieved by executing our
tensor-based method on a GPU versus on a CPU across different problem sizes
for the MMAP task. The results indicate that for large problem sizes, the
GPU-based implementation can improve \emph{TensorInference.jl}'s performance
by one to two orders of magnitude. However, for tasks with small problem
sizes, the overhead associated with transferring data between the CPU and GPU,
along with the time to launch GPU kernels, outweighs the advantages of using
the GPU, resulting in decreased performance. This finding aligns with the
observation that when space complexity is high, a few steps of tensor
contraction operations become the most time-consuming parts, and GPUs are
especially effective in accelerating these operations.

\begin{figure}
  \centering
  % Recommended preamble:
\begin{tikzpicture}
\begin{axis}[title={GPU vs. CPU Runtime Speedup}, xmin={10}, xmax={29.0}, xlabel={Space Complexity}, xmajorgrids={true}, ymax={100}, ymode={log}, ymajorgrids={true}, ylabel={Runtime speedup}, label style={font={\footnotesize}}, tick label style={font={\footnotesize}}, scatter/classes={Grids={mark={asterisk}}, Promedas={mark={oplus}}, ProteinFolding={mark={otimes}}, Segmentation={mark={Mercedes star}}}, legend style={legend columns={4}, at={(0.51,-0.26)
}, anchor={south}, draw={none}, font={\footnotesize}, column sep={1.5}}]
    \addplot[black, scatter, only marks, scatter src={explicit symbolic}, legend image post style={black}, legend style={text={black}, font={\footnotesize}}]
        table[row sep={\\}, meta={label}]
        {
            x  y  label  \\
            12.0  0.1771556743905404  Segmentation
              \\
            12.0  0.1761416504144137  Segmentation
              \\
            13.0  0.18536737498115172  Segmentation
              \\
            13.0  0.18775563146376795  Segmentation
              \\
            14.0  0.2028353808762056  Segmentation
              \\
            14.0  0.20549911648503189  Segmentation
              \\
            15.0  0.21249822783693526  Promedas
              \\
            19.0  0.9023801535430181  Promedas
              \\
            19.0  0.8230198687127651  Promedas
              \\
            19.0  1.1203373283743392  Promedas
              \\
            20.0  2.397816315466137  Promedas
              \\
            20.0  2.278621745828834  ProteinFolding
              \\
            21.0  3.4561401478293283  Promedas
              \\
            21.0  3.843389605019409  Promedas
              \\
            22.0  3.5770780062533505  Promedas
              \\
            22.0  5.179359860817827  Promedas
              \\
            23.0  6.1445301917104596  Promedas
              \\
            23.0  6.723916651718724  Promedas
              \\
            24.0  14.49230380278894  Grids
              \\
            24.0  11.365651536922615  Promedas
              \\
            24.0  7.354762983728186  Promedas
              \\
            24.0  9.277698799632528  Promedas
              \\
            24.0  9.041243412982242  Promedas
              \\
            25.0  11.064519044204596  Promedas
              \\
            25.0  8.973945463310118  Promedas
              \\
            27.0  12.74005929078088  Promedas
              \\
            27.0  23.303894653248367  Promedas
              \\
            27.0  11.90081459712289  Promedas
              \\
            27.0  10.944124811088697  Promedas
              \\
            28.0  21.06310890623099  Promedas
              \\
            28.0  12.447245155936887  Promedas
              \\
            28.0  15.627680484103992  Promedas
              \\
        }
        ;
    \draw[solid, black, line width={0.4pt}] ({rel axis cs:1,0}|-{axis cs:0,1}) -- ({rel axis cs:0,0}|-{axis cs:0,1});
    \legend{{Grids},{Promedas},{ProteinFolding},{Segmentation}}
\end{axis}
\end{tikzpicture}
  \caption{
    \emph{TensorInference.jl}'s runtime speedup on a GPU for the MMAP task,
    relative to CPU performance, benchmarked on the UAI 2014 inference
    competition problems.
  }
  \label{fig:mmap-gpu-vs-cpu}
\end{figure}
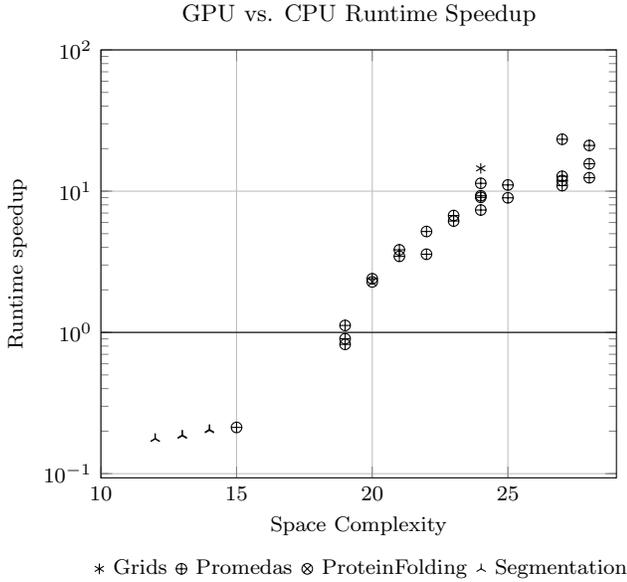

\section{Conclusions} \label{sec:conclusions}

We have formulated a series of prevalent probabilistic inference tasks in
terms of tensor network contractions and provided their corresponding
implementations. Our proposed formulation streamlines analogue formulations
encountered in classical Bayesian inference methods, including the family of
junction tree algorithms, by abstracting notions based on message passing and
by leveraging established tools such as differential programming frameworks.
We have shown how adjusting the algebraic system of a tensor network can be
used to solve different probabilistic inference tasks. We introduced the
unity-tensor approach to efficiently compute the marginal probabilities of
multiple variables using automatic differentiation. We also demonstrated how
tropical tensor network representations can be employed for computing the most
likely assignment of variables. Additionally, we unified the previously
developed sampling algorithms for chain and tree tensor networks.

As a product of this research, we have provided an implementation of our
proposed methods in the form of a Julia package, namely
\texttt{TensorInference.jl}~\cite{roa2023tensor}. Our library integrates the
latest developments in tensor network contraction order finding algorithms
from quantum computing into probabilistic inference. Moreover, our tensor
contraction implementation naturally compiles to BLAS functions, enabling us
to fully utilize the computational power of hardware such as CPUs, GPUs, and
TPUs, although the latter was not tested in this work.
%We implemented tropical GEMM on CUDA devices, which improves the performance
%of our framework in solving MPE and MMAP tasks by several orders of
%magnitude.

We conducted a comparative evaluation against three other open-source
libraries for probabilistic inference. Our method demonstrated substantial
speedups in runtime performance compared to the reference libraries across
various probabilistic tasks. Notably, the improvements became more pronounced
as the model complexity increased. These results underscore the potential of
our method in broadening the tractability spectrum of exact inference for
increasingly complex models.

As a future direction, we plan to utilize the tensor network framework to
speed up the quantum error correction process~\cite{ferris2014tensor}. The
quantum error correction process can be formulated as an MPE problem, where
the goal is to find the most likely error pattern given the observed syndrome
(or evidence).

\begin{acknowledgments}
  This work is partially funded by the Netherlands Organization for Scientific
  Research (P15-06 project 2) and the Guangzhou Municipal Science and
  Technology Project (No. 2023A03J0003 and No. 2024A04J4304). The authors
  thank Madelyn Cain and Pan Zhang for valuable advice, and Zhong-Yi Ni for
  insightful discussions on quantum error correction. We acknowledge the use
  of AI tools like Grammarly and ChatGPT for sentence rephrasing and grammar
  checks.
\end{acknowledgments}

\appendix

\section{Backward rule for tensor contraction} \label{sec:einback}

In this section, we will derive ${\text{\Cref{eq:einback}}}$, which is the
backward rule for a pairwise tensor contraction, denoted by
${\text{\contract}}(\Lambda, {A_{V_a}, B_{V_b}}, V_c)$.
Let $ \mathcal{L} $ be a loss function of interest, where its differential
form is given by:
\begin{align}\label{eq:diffeq}
    \delta\mathcal{L} &= \begin{multlined}[t]
      \contract(V_a, \{\delta A_{V_a},
      \overline{A}_{V_a}\}, \emptyset) + \contract(V_b,
      \{\delta B_{V_b}, \overline{B}_{V_b}\}, \emptyset)
  \end{multlined}\nonumber\\
                      &= \contract(V_c, \{\delta C_{V_c},
                      \overline{C}_{V_c}\}, \emptyset).
\end{align}
The goal is to find $\overline{A}_{V_a}$ and $\overline{B}_{V_b}$ given $\overline{C}_{V_c}$.
This can be achieved by using the differential form of tensor contraction, which states that
\begin{equation}
  \begin{split}
    \delta C = & \contract(\Lambda, \{\delta A_{V_a}, B_{V_b}\},
    V_c)\\
               &+ \contract(\Lambda, \{A_{V_a}, \delta B_{V_b}\},
               V_c).
  \end{split}
\end{equation}
By inserting this result into \Cref{eq:diffeq}, we obtain:
\begin{align}
  \delta\mathcal{L} &=
  \begin{multlined}[t]
    \contract(V_a, \{\delta A_{V_a}, \overline{A}_{V_a}\}, \emptyset)\\ + \contract(V_b, \{\delta B_{V_b}, \overline{B}_{V_b}\}, \emptyset)
  \end{multlined}\nonumber\\
                    &= \begin{multlined}[t] 
                      \contract(\Lambda, \{\delta A_{V_a},
                      B_{V_b}, \overline{C}_{V_c}\}, \emptyset) \\
                      + \contract(\Lambda, \{A_{V_a}, \delta
                      B_{V_b}, \overline{C}_{V_c}\}, \emptyset).
                    \end{multlined}
\end{align}
Since $\delta A_{V_a}$ and $\delta B_{V_b}$ are arbitrary, the above equation
immediately implies \Cref{eq:einback}.

% The \nocite command causes all entries in a bibliography to be printed out
% whether or not they are actually referenced in the text. This is appropriate
% for the sample file to show the different styles of references, but authors
% most likely will not want to use it.
%\nocite{*}

\bibliography{bibliography}% Produces the bibliography via BibTeX.

%apsrev4-2.bst 2019-01-14 (MD) hand-edited version of apsrev4-1.bst
%Control: key (0)
%Control: author (72) initials jnrlst
%Control: editor formatted (1) identically to author
%Control: production of article title (-1) disabled
%Control: page (0) single
%Control: year (1) truncated
%Control: production of eprint (0) enabled
\begin{thebibliography}{41}%
\makeatletter
\providecommand \@ifxundefined [1]{%
 \@ifx{#1\undefined}
}%
\providecommand \@ifnum [1]{%
 \ifnum #1\expandafter \@firstoftwo
 \else \expandafter \@secondoftwo
 \fi
}%
\providecommand \@ifx [1]{%
 \ifx #1\expandafter \@firstoftwo
 \else \expandafter \@secondoftwo
 \fi
}%
\providecommand \natexlab [1]{#1}%
\providecommand \enquote  [1]{``#1''}%
\providecommand \bibnamefont  [1]{#1}%
\providecommand \bibfnamefont [1]{#1}%
\providecommand \citenamefont [1]{#1}%
\providecommand \href@noop [0]{\@secondoftwo}%
\providecommand \href [0]{\begingroup \@sanitize@url \@href}%
\providecommand \@href[1]{\@@startlink{#1}\@@href}%
\providecommand \@@href[1]{\endgroup#1\@@endlink}%
\providecommand \@sanitize@url [0]{\catcode `\\12\catcode `\$12\catcode `\&12\catcode `\#12\catcode `\^12\catcode `\_12\catcode `\%12\relax}%
\providecommand \@@startlink[1]{}%
\providecommand \@@endlink[0]{}%
\providecommand \url  [0]{\begingroup\@sanitize@url \@url }%
\providecommand \@url [1]{\endgroup\@href {#1}{\urlprefix }}%
\providecommand \urlprefix  [0]{URL }%
\providecommand \Eprint [0]{\href }%
\providecommand \doibase [0]{https://doi.org/}%
\providecommand \selectlanguage [0]{\@gobble}%
\providecommand \bibinfo  [0]{\@secondoftwo}%
\providecommand \bibfield  [0]{\@secondoftwo}%
\providecommand \translation [1]{[#1]}%
\providecommand \BibitemOpen [0]{}%
\providecommand \bibitemStop [0]{}%
\providecommand \bibitemNoStop [0]{.\EOS\space}%
\providecommand \EOS [0]{\spacefactor3000\relax}%
\providecommand \BibitemShut  [1]{\csname bibitem#1\endcsname}%
\let\auto@bib@innerbib\@empty
%</preamble>
\bibitem [{\citenamefont {Lauritzen}\ and\ \citenamefont {Spiegelhalter}(1988)}]{lauritzen1988local}%
  \BibitemOpen
  \bibfield  {author} {\bibinfo {author} {\bibfnamefont {S.~L.}\ \bibnamefont {Lauritzen}}\ and\ \bibinfo {author} {\bibfnamefont {D.~J.}\ \bibnamefont {Spiegelhalter}},\ }\href {http://www.jstor.org/stable/2345762} {\bibfield  {journal} {\bibinfo  {journal} {Journal of the Royal Statistical Society. Series B (Methodological)}\ }\textbf {\bibinfo {volume} {50}},\ \bibinfo {pages} {157} (\bibinfo {year} {1988})}\BibitemShut {NoStop}%
\bibitem [{\citenamefont {Jensen}\ \emph {et~al.}(1990)\citenamefont {Jensen}, \citenamefont {Lauritzen},\ and\ \citenamefont {Olesen}}]{jensen1990bayesian}%
  \BibitemOpen
  \bibfield  {author} {\bibinfo {author} {\bibfnamefont {F.}~\bibnamefont {Jensen}}, \bibinfo {author} {\bibfnamefont {S.}~\bibnamefont {Lauritzen}},\ and\ \bibinfo {author} {\bibfnamefont {K.}~\bibnamefont {Olesen}},\ }\href@noop {} {\bibfield  {journal} {\bibinfo  {journal} {Computational Statistics Quarterly}\ }\textbf {\bibinfo {volume} {4}},\ \bibinfo {pages} {269} (\bibinfo {year} {1990})}\BibitemShut {NoStop}%
\bibitem [{\citenamefont {Shachter}\ \emph {et~al.}(1990)\citenamefont {Shachter}, \citenamefont {D'Ambrosio},\ and\ \citenamefont {Del~Favero}}]{shachter1990symbolic}%
  \BibitemOpen
  \bibfield  {author} {\bibinfo {author} {\bibfnamefont {R.~D.}\ \bibnamefont {Shachter}}, \bibinfo {author} {\bibfnamefont {B.}~\bibnamefont {D'Ambrosio}},\ and\ \bibinfo {author} {\bibfnamefont {B.~A.}\ \bibnamefont {Del~Favero}},\ }in\ \href@noop {} {\emph {\bibinfo {booktitle} {Proceedings of the Eighth National Conference on Artificial Intelligence - Volume 1}}},\ \bibinfo {series and number} {AAAI'90}\ (\bibinfo  {publisher} {AAAI Press},\ \bibinfo {year} {1990})\ p.\ \bibinfo {pages} {126–131}\BibitemShut {NoStop}%
\bibitem [{\citenamefont {Li}\ and\ \citenamefont {D'Ambrosio}(1994)}]{li1994efficient}%
  \BibitemOpen
  \bibfield  {author} {\bibinfo {author} {\bibfnamefont {Z.}~\bibnamefont {Li}}\ and\ \bibinfo {author} {\bibfnamefont {B.}~\bibnamefont {D'Ambrosio}},\ }\href {https://doi.org/https://doi.org/10.1016/0888-613X(94)90019-1} {\bibfield  {journal} {\bibinfo  {journal} {International Journal of Approximate Reasoning}\ }\textbf {\bibinfo {volume} {11}},\ \bibinfo {pages} {55} (\bibinfo {year} {1994})}\BibitemShut {NoStop}%
\bibitem [{\citenamefont {Gehr}\ \emph {et~al.}(2016)\citenamefont {Gehr}, \citenamefont {Misailovic},\ and\ \citenamefont {Vechev}}]{gehr2016psi}%
  \BibitemOpen
  \bibfield  {author} {\bibinfo {author} {\bibfnamefont {T.}~\bibnamefont {Gehr}}, \bibinfo {author} {\bibfnamefont {S.}~\bibnamefont {Misailovic}},\ and\ \bibinfo {author} {\bibfnamefont {M.}~\bibnamefont {Vechev}},\ }in\ \href@noop {} {\emph {\bibinfo {booktitle} {Computer Aided Verification}}},\ \bibinfo {editor} {edited by\ \bibinfo {editor} {\bibfnamefont {S.}~\bibnamefont {Chaudhuri}}\ and\ \bibinfo {editor} {\bibfnamefont {A.}~\bibnamefont {Farzan}}}\ (\bibinfo  {publisher} {Springer International Publishing},\ \bibinfo {address} {Cham},\ \bibinfo {year} {2016})\ pp.\ \bibinfo {pages} {62--83}\BibitemShut {NoStop}%
\bibitem [{\citenamefont {Chavira}\ and\ \citenamefont {Darwiche}(2008)}]{chavira2008probabilistic}%
  \BibitemOpen
  \bibfield  {author} {\bibinfo {author} {\bibfnamefont {M.}~\bibnamefont {Chavira}}\ and\ \bibinfo {author} {\bibfnamefont {A.}~\bibnamefont {Darwiche}},\ }\href {https://doi.org/https://doi.org/10.1016/j.artint.2007.11.002} {\bibfield  {journal} {\bibinfo  {journal} {Artificial Intelligence}\ }\textbf {\bibinfo {volume} {172}},\ \bibinfo {pages} {772} (\bibinfo {year} {2008})}\BibitemShut {NoStop}%
\bibitem [{\citenamefont {Holtzen}\ \emph {et~al.}(2020)\citenamefont {Holtzen}, \citenamefont {Van~den Broeck},\ and\ \citenamefont {Millstein}}]{holtzen2020scaling}%
  \BibitemOpen
  \bibfield  {author} {\bibinfo {author} {\bibfnamefont {S.}~\bibnamefont {Holtzen}}, \bibinfo {author} {\bibfnamefont {G.}~\bibnamefont {Van~den Broeck}},\ and\ \bibinfo {author} {\bibfnamefont {T.}~\bibnamefont {Millstein}},\ }\bibfield  {journal} {\bibinfo  {journal} {Proc. ACM Program. Lang.}\ }\textbf {\bibinfo {volume} {4}},\ \href {https://doi.org/10.1145/3428208} {10.1145/3428208} (\bibinfo {year} {2020})\BibitemShut {NoStop}%
\bibitem [{\citenamefont {Darwiche}(2003)}]{darwiche2003differential}%
  \BibitemOpen
  \bibfield  {author} {\bibinfo {author} {\bibfnamefont {A.}~\bibnamefont {Darwiche}},\ }\href {https://doi.org/10.1145/765568.765570} {\bibfield  {journal} {\bibinfo  {journal} {J. ACM}\ }\textbf {\bibinfo {volume} {50}},\ \bibinfo {pages} {280–305} (\bibinfo {year} {2003})}\BibitemShut {NoStop}%
\bibitem [{\citenamefont {Darwiche}(2020)}]{darwiche2020advance}%
  \BibitemOpen
  \bibfield  {author} {\bibinfo {author} {\bibfnamefont {A.}~\bibnamefont {Darwiche}},\ }in\ \href {https://doi.org/10.3233/FAIA200391} {\emph {\bibinfo {booktitle} {{ECAI} 2020 - 24th European Conference on Artificial Intelligence}}},\ \bibinfo {series} {Frontiers in Artificial Intelligence and Applications}, Vol.\ \bibinfo {volume} {325},\ \bibinfo {editor} {edited by\ \bibinfo {editor} {\bibfnamefont {G.~D.}\ \bibnamefont {Giacomo}}, \bibinfo {editor} {\bibfnamefont {A.}~\bibnamefont {Catal{\'{a}}}}, \bibinfo {editor} {\bibfnamefont {B.}~\bibnamefont {Dilkina}}, \bibinfo {editor} {\bibfnamefont {M.}~\bibnamefont {Milano}}, \bibinfo {editor} {\bibfnamefont {S.}~\bibnamefont {Barro}}, \bibinfo {editor} {\bibfnamefont {A.}~\bibnamefont {Bugar{\'{\i}}n}},\ and\ \bibinfo {editor} {\bibfnamefont {J.}~\bibnamefont {Lang}}}\ (\bibinfo  {publisher} {{IOS} Press},\ \bibinfo {year} {2020})\ pp.\ \bibinfo {pages} {2559--2568}\BibitemShut {NoStop}%
\bibitem [{\citenamefont {Nielsen}\ and\ \citenamefont {Chuang}(2010)}]{nielsen2010quantum}%
  \BibitemOpen
  \bibfield  {author} {\bibinfo {author} {\bibfnamefont {M.~A.}\ \bibnamefont {Nielsen}}\ and\ \bibinfo {author} {\bibfnamefont {I.~L.}\ \bibnamefont {Chuang}},\ }\href@noop {} {\emph {\bibinfo {title} {Quantum Computation and Quantum Information: 10th Anniversary Edition}}}\ (\bibinfo  {publisher} {Cambridge University Press},\ \bibinfo {year} {2010})\BibitemShut {NoStop}%
\bibitem [{\citenamefont {Orús}(2014)}]{orus2014advances}%
  \BibitemOpen
  \bibfield  {author} {\bibinfo {author} {\bibfnamefont {R.}~\bibnamefont {Orús}},\ }\bibfield  {journal} {\bibinfo  {journal} {The European Physical Journal B}\ }\textbf {\bibinfo {volume} {87}},\ \href {https://doi.org/10.1140/epjb/e2014-50502-9} {10.1140/epjb/e2014-50502-9} (\bibinfo {year} {2014})\BibitemShut {NoStop}%
\bibitem [{\citenamefont {Perez-Garcia}\ \emph {et~al.}(2007)\citenamefont {Perez-Garcia}, \citenamefont {Verstraete}, \citenamefont {Wolf},\ and\ \citenamefont {Cirac}}]{perez2007matrix}%
  \BibitemOpen
  \bibfield  {author} {\bibinfo {author} {\bibfnamefont {D.}~\bibnamefont {Perez-Garcia}}, \bibinfo {author} {\bibfnamefont {F.}~\bibnamefont {Verstraete}}, \bibinfo {author} {\bibfnamefont {M.~M.}\ \bibnamefont {Wolf}},\ and\ \bibinfo {author} {\bibfnamefont {J.~I.}\ \bibnamefont {Cirac}},\ }\href@noop {} {\bibfield  {journal} {\bibinfo  {journal} {Quantum Info. Comput.}\ }\textbf {\bibinfo {volume} {7}},\ \bibinfo {pages} {401–430} (\bibinfo {year} {2007})}\BibitemShut {NoStop}%
\bibitem [{\citenamefont {Shi}\ \emph {et~al.}(2006)\citenamefont {Shi}, \citenamefont {Duan},\ and\ \citenamefont {Vidal}}]{shi2006classical}%
  \BibitemOpen
  \bibfield  {author} {\bibinfo {author} {\bibfnamefont {Y.-Y.}\ \bibnamefont {Shi}}, \bibinfo {author} {\bibfnamefont {L.-M.}\ \bibnamefont {Duan}},\ and\ \bibinfo {author} {\bibfnamefont {G.}~\bibnamefont {Vidal}},\ }\href {https://doi.org/10.1103/PhysRevA.74.022320} {\bibfield  {journal} {\bibinfo  {journal} {Phys. Rev. A}\ }\textbf {\bibinfo {volume} {74}},\ \bibinfo {pages} {022320} (\bibinfo {year} {2006})}\BibitemShut {NoStop}%
\bibitem [{\citenamefont {Vidal}(2007)}]{vidal2007entanglement}%
  \BibitemOpen
  \bibfield  {author} {\bibinfo {author} {\bibfnamefont {G.}~\bibnamefont {Vidal}},\ }\href {https://doi.org/10.1103/PhysRevLett.99.220405} {\bibfield  {journal} {\bibinfo  {journal} {Phys. Rev. Lett.}\ }\textbf {\bibinfo {volume} {99}},\ \bibinfo {pages} {220405} (\bibinfo {year} {2007})}\BibitemShut {NoStop}%
\bibitem [{\citenamefont {Verstraete}\ and\ \citenamefont {Cirac}(2004)}]{verstraete2004renormalization}%
  \BibitemOpen
  \bibfield  {author} {\bibinfo {author} {\bibfnamefont {F.}~\bibnamefont {Verstraete}}\ and\ \bibinfo {author} {\bibfnamefont {J.~I.}\ \bibnamefont {Cirac}},\ }\href@noop {} {\bibinfo {title} {Renormalization algorithms for quantum-many body systems in two and higher dimensions}} (\bibinfo {year} {2004}),\ \Eprint {https://arxiv.org/abs/cond-mat/0407066} {arXiv:cond-mat/0407066 [cond-mat.str-el]} \BibitemShut {NoStop}%
\bibitem [{\citenamefont {Arute}\ \emph {et~al.}(2019)\citenamefont {Arute}, \citenamefont {Arya}, \citenamefont {Babbush}, \citenamefont {Bacon}, \citenamefont {Bardin}, \citenamefont {Barends}, \citenamefont {Biswas}, \citenamefont {Boixo}, \citenamefont {Brandao}, \citenamefont {Buell} \emph {et~al.}}]{arute2019quantum}%
  \BibitemOpen
  \bibfield  {author} {\bibinfo {author} {\bibfnamefont {F.}~\bibnamefont {Arute}}, \bibinfo {author} {\bibfnamefont {K.}~\bibnamefont {Arya}}, \bibinfo {author} {\bibfnamefont {R.}~\bibnamefont {Babbush}}, \bibinfo {author} {\bibfnamefont {D.}~\bibnamefont {Bacon}}, \bibinfo {author} {\bibfnamefont {J.~C.}\ \bibnamefont {Bardin}}, \bibinfo {author} {\bibfnamefont {R.}~\bibnamefont {Barends}}, \bibinfo {author} {\bibfnamefont {R.}~\bibnamefont {Biswas}}, \bibinfo {author} {\bibfnamefont {S.}~\bibnamefont {Boixo}}, \bibinfo {author} {\bibfnamefont {F.~G.}\ \bibnamefont {Brandao}}, \bibinfo {author} {\bibfnamefont {D.~A.}\ \bibnamefont {Buell}}, \emph {et~al.},\ }\href@noop {} {\bibfield  {journal} {\bibinfo  {journal} {Nature}\ }\textbf {\bibinfo {volume} {574}},\ \bibinfo {pages} {505} (\bibinfo {year} {2019})}\BibitemShut {NoStop}%
\bibitem [{\citenamefont {Markov}\ and\ \citenamefont {Shi}(2008)}]{markov2008simulating}%
  \BibitemOpen
  \bibfield  {author} {\bibinfo {author} {\bibfnamefont {I.~L.}\ \bibnamefont {Markov}}\ and\ \bibinfo {author} {\bibfnamefont {Y.}~\bibnamefont {Shi}},\ }\href {https://doi.org/10.1137/050644756} {\bibfield  {journal} {\bibinfo  {journal} {SIAM Journal on Computing}\ }\textbf {\bibinfo {volume} {38}},\ \bibinfo {pages} {963–981} (\bibinfo {year} {2008})}\BibitemShut {NoStop}%
\bibitem [{\citenamefont {Pan}\ and\ \citenamefont {Zhang}(2022)}]{pan2022simulation}%
  \BibitemOpen
  \bibfield  {author} {\bibinfo {author} {\bibfnamefont {F.}~\bibnamefont {Pan}}\ and\ \bibinfo {author} {\bibfnamefont {P.}~\bibnamefont {Zhang}},\ }\href@noop {} {\bibfield  {journal} {\bibinfo  {journal} {Physical Review Letters}\ }\textbf {\bibinfo {volume} {128}},\ \bibinfo {pages} {030501} (\bibinfo {year} {2022})}\BibitemShut {NoStop}%
\bibitem [{\citenamefont {Gao}\ \emph {et~al.}(2021)\citenamefont {Gao}, \citenamefont {Kalinowski}, \citenamefont {Chou}, \citenamefont {Lukin}, \citenamefont {Barak},\ and\ \citenamefont {Choi}}]{gao2021limitations}%
  \BibitemOpen
  \bibfield  {author} {\bibinfo {author} {\bibfnamefont {X.}~\bibnamefont {Gao}}, \bibinfo {author} {\bibfnamefont {M.}~\bibnamefont {Kalinowski}}, \bibinfo {author} {\bibfnamefont {C.-N.}\ \bibnamefont {Chou}}, \bibinfo {author} {\bibfnamefont {M.~D.}\ \bibnamefont {Lukin}}, \bibinfo {author} {\bibfnamefont {B.}~\bibnamefont {Barak}},\ and\ \bibinfo {author} {\bibfnamefont {S.}~\bibnamefont {Choi}},\ }\href@noop {} {\bibfield  {journal} {\bibinfo  {journal} {arXiv preprint arXiv:2112.01657}\ } (\bibinfo {year} {2021})}\BibitemShut {NoStop}%
\bibitem [{\citenamefont {Han}\ \emph {et~al.}(2018)\citenamefont {Han}, \citenamefont {Wang}, \citenamefont {Fan}, \citenamefont {Wang},\ and\ \citenamefont {Zhang}}]{han2018unsupervised}%
  \BibitemOpen
  \bibfield  {author} {\bibinfo {author} {\bibfnamefont {Z.-Y.}\ \bibnamefont {Han}}, \bibinfo {author} {\bibfnamefont {J.}~\bibnamefont {Wang}}, \bibinfo {author} {\bibfnamefont {H.}~\bibnamefont {Fan}}, \bibinfo {author} {\bibfnamefont {L.}~\bibnamefont {Wang}},\ and\ \bibinfo {author} {\bibfnamefont {P.}~\bibnamefont {Zhang}},\ }\href@noop {} {\bibfield  {journal} {\bibinfo  {journal} {Physical Review X}\ }\textbf {\bibinfo {volume} {8}},\ \bibinfo {pages} {031012} (\bibinfo {year} {2018})}\BibitemShut {NoStop}%
\bibitem [{\citenamefont {Cheng}\ \emph {et~al.}(2019)\citenamefont {Cheng}, \citenamefont {Wang}, \citenamefont {Xiang},\ and\ \citenamefont {Zhang}}]{cheng2019tree}%
  \BibitemOpen
  \bibfield  {author} {\bibinfo {author} {\bibfnamefont {S.}~\bibnamefont {Cheng}}, \bibinfo {author} {\bibfnamefont {L.}~\bibnamefont {Wang}}, \bibinfo {author} {\bibfnamefont {T.}~\bibnamefont {Xiang}},\ and\ \bibinfo {author} {\bibfnamefont {P.}~\bibnamefont {Zhang}},\ }\href@noop {} {\bibfield  {journal} {\bibinfo  {journal} {Physical Review B}\ }\textbf {\bibinfo {volume} {99}},\ \bibinfo {pages} {155131} (\bibinfo {year} {2019})}\BibitemShut {NoStop}%
\bibitem [{\citenamefont {Robeva}\ and\ \citenamefont {Seigal}(2019)}]{robeva2019duality}%
  \BibitemOpen
  \bibfield  {author} {\bibinfo {author} {\bibfnamefont {E.}~\bibnamefont {Robeva}}\ and\ \bibinfo {author} {\bibfnamefont {A.}~\bibnamefont {Seigal}},\ }\href@noop {} {\bibfield  {journal} {\bibinfo  {journal} {Information and Inference: A Journal of the IMA}\ }\textbf {\bibinfo {volume} {8}},\ \bibinfo {pages} {273} (\bibinfo {year} {2019})}\BibitemShut {NoStop}%
\bibitem [{\citenamefont {Gray}\ and\ \citenamefont {Kourtis}(2021)}]{gray2021hyper}%
  \BibitemOpen
  \bibfield  {author} {\bibinfo {author} {\bibfnamefont {J.}~\bibnamefont {Gray}}\ and\ \bibinfo {author} {\bibfnamefont {S.}~\bibnamefont {Kourtis}},\ }\href {https://doi.org/10.22331/q-2021-03-15-410} {\bibfield  {journal} {\bibinfo  {journal} {Quantum}\ }\textbf {\bibinfo {volume} {5}},\ \bibinfo {pages} {410} (\bibinfo {year} {2021})}\BibitemShut {NoStop}%
\bibitem [{\citenamefont {Kalachev}\ \emph {et~al.}(2022)\citenamefont {Kalachev}, \citenamefont {Panteleev},\ and\ \citenamefont {Yung}}]{kalachev2022multi}%
  \BibitemOpen
  \bibfield  {author} {\bibinfo {author} {\bibfnamefont {G.}~\bibnamefont {Kalachev}}, \bibinfo {author} {\bibfnamefont {P.}~\bibnamefont {Panteleev}},\ and\ \bibinfo {author} {\bibfnamefont {M.-H.}\ \bibnamefont {Yung}},\ }\href {https://doi.org/10.48550/arXiv.2108.05665} {\bibinfo {title} {Multi-tensor contraction for xeb verification of quantum circuits}} (\bibinfo {year} {2022}),\ \Eprint {https://arxiv.org/abs/2108.05665} {arXiv:2108.05665 [quant-ph]} \BibitemShut {NoStop}%
\bibitem [{\citenamefont {Liu}\ \emph {et~al.}(2021)\citenamefont {Liu}, \citenamefont {Wang},\ and\ \citenamefont {Zhang}}]{liu2021tropical}%
  \BibitemOpen
  \bibfield  {author} {\bibinfo {author} {\bibfnamefont {J.~G.}\ \bibnamefont {Liu}}, \bibinfo {author} {\bibfnamefont {L.}~\bibnamefont {Wang}},\ and\ \bibinfo {author} {\bibfnamefont {P.}~\bibnamefont {Zhang}},\ }\bibfield  {journal} {\bibinfo  {journal} {Physical Review Letters}\ }\textbf {\bibinfo {volume} {126}},\ \href {https://doi.org/10.1103/physrevlett.126.090506} {10.1103/physrevlett.126.090506} (\bibinfo {year} {2021})\BibitemShut {NoStop}%
\bibitem [{\citenamefont {Liu}\ \emph {et~al.}(2023)\citenamefont {Liu}, \citenamefont {Gao}, \citenamefont {Cain}, \citenamefont {Lukin},\ and\ \citenamefont {Wang}}]{liu2022computing}%
  \BibitemOpen
  \bibfield  {author} {\bibinfo {author} {\bibfnamefont {J.~G.}\ \bibnamefont {Liu}}, \bibinfo {author} {\bibfnamefont {X.}~\bibnamefont {Gao}}, \bibinfo {author} {\bibfnamefont {M.}~\bibnamefont {Cain}}, \bibinfo {author} {\bibfnamefont {M.~D.}\ \bibnamefont {Lukin}},\ and\ \bibinfo {author} {\bibfnamefont {S.~T.}\ \bibnamefont {Wang}},\ }\href {https://doi.org/10.1137/22M1501787} {\bibfield  {journal} {\bibinfo  {journal} {SIAM Journal on Scientific Computing}\ }\textbf {\bibinfo {volume} {45}},\ \bibinfo {pages} {A1239} (\bibinfo {year} {2023})},\ \Eprint {https://arxiv.org/abs/https://doi.org/10.1137/22M1501787} {https://doi.org/10.1137/22M1501787} \BibitemShut {NoStop}%
\bibitem [{\citenamefont {Blackford}\ \emph {et~al.}(2002)\citenamefont {Blackford}, \citenamefont {Petitet}, \citenamefont {Pozo}, \citenamefont {Remington}, \citenamefont {Whaley}, \citenamefont {Demmel}, \citenamefont {Dongarra}, \citenamefont {Duff}, \citenamefont {Hammarling}, \citenamefont {Henry} \emph {et~al.}}]{blackford2002updated}%
  \BibitemOpen
  \bibfield  {author} {\bibinfo {author} {\bibfnamefont {L.~S.}\ \bibnamefont {Blackford}}, \bibinfo {author} {\bibfnamefont {A.}~\bibnamefont {Petitet}}, \bibinfo {author} {\bibfnamefont {R.}~\bibnamefont {Pozo}}, \bibinfo {author} {\bibfnamefont {K.}~\bibnamefont {Remington}}, \bibinfo {author} {\bibfnamefont {R.~C.}\ \bibnamefont {Whaley}}, \bibinfo {author} {\bibfnamefont {J.}~\bibnamefont {Demmel}}, \bibinfo {author} {\bibfnamefont {J.}~\bibnamefont {Dongarra}}, \bibinfo {author} {\bibfnamefont {I.}~\bibnamefont {Duff}}, \bibinfo {author} {\bibfnamefont {S.}~\bibnamefont {Hammarling}}, \bibinfo {author} {\bibfnamefont {G.}~\bibnamefont {Henry}}, \emph {et~al.},\ }\href {https://doi.org/10.1145/567806.567807} {\bibfield  {journal} {\bibinfo  {journal} {ACM Transactions on Mathematical Software}\ }\textbf {\bibinfo {volume} {28}},\ \bibinfo {pages} {135} (\bibinfo {year} {2002})}\BibitemShut {NoStop}%
\bibitem [{\citenamefont {Liu}\ and\ \citenamefont {Elrod}(2023)}]{liu2023tropical}%
  \BibitemOpen
  \bibfield  {author} {\bibinfo {author} {\bibfnamefont {J.~G.}\ \bibnamefont {Liu}}\ and\ \bibinfo {author} {\bibfnamefont {C.}~\bibnamefont {Elrod}},\ }\href@noop {} {} (\bibinfo {year} {2023}),\ \bibinfo {note} {\href{https://github.com/TensorBFS/TropicalGEMM.jl}{https://github.com/TensorBFS/TropicalGEMM.jl}}\BibitemShut {NoStop}%
\bibitem [{\citenamefont {Roa-Villescas}\ and\ \citenamefont {Liu}(2023)}]{roa2023tensor}%
  \BibitemOpen
  \bibfield  {author} {\bibinfo {author} {\bibfnamefont {M.}~\bibnamefont {Roa-Villescas}}\ and\ \bibinfo {author} {\bibfnamefont {J.~G.}\ \bibnamefont {Liu}},\ }\href {https://doi.org/10.21105/joss.05700} {\bibfield  {journal} {\bibinfo  {journal} {Journal of Open Source Software}\ }\textbf {\bibinfo {volume} {8}},\ \bibinfo {pages} {5700} (\bibinfo {year} {2023})}\BibitemShut {NoStop}%
\bibitem [{\citenamefont {Cirac}\ \emph {et~al.}(2021)\citenamefont {Cirac}, \citenamefont {P{\'{e} }rez-Garc{\'{\i}}a}, \citenamefont {Schuch},\ and\ \citenamefont {Verstraete}}]{cirac2021matrix}%
  \BibitemOpen
  \bibfield  {author} {\bibinfo {author} {\bibfnamefont {J.~I.}\ \bibnamefont {Cirac}}, \bibinfo {author} {\bibfnamefont {D.}~\bibnamefont {P{\'{e} }rez-Garc{\'{\i}}a}}, \bibinfo {author} {\bibfnamefont {N.}~\bibnamefont {Schuch}},\ and\ \bibinfo {author} {\bibfnamefont {F.}~\bibnamefont {Verstraete}},\ }\bibfield  {journal} {\bibinfo  {journal} {Reviews of Modern Physics}\ }\textbf {\bibinfo {volume} {93}},\ \href {https://doi.org/10.1103/revmodphys.93.045003} {10.1103/revmodphys.93.045003} (\bibinfo {year} {2021})\BibitemShut {NoStop}%
\bibitem [{\citenamefont {Or{\'{u}}s}(2014)}]{orus2014practical}%
  \BibitemOpen
  \bibfield  {author} {\bibinfo {author} {\bibfnamefont {R.}~\bibnamefont {Or{\'{u}}s}},\ }\href {https://doi.org/10.1016/j.aop.2014.06.013} {\bibfield  {journal} {\bibinfo  {journal} {Annals of Physics}\ }\textbf {\bibinfo {volume} {349}},\ \bibinfo {pages} {117} (\bibinfo {year} {2014})}\BibitemShut {NoStop}%
\bibitem [{\citenamefont {Roa-Villescas}\ \emph {et~al.}(2023)\citenamefont {Roa-Villescas}, \citenamefont {Liu}, \citenamefont {Wijnings}, \citenamefont {Stuijk},\ and\ \citenamefont {Corporaal}}]{roa2023scaling}%
  \BibitemOpen
  \bibfield  {author} {\bibinfo {author} {\bibfnamefont {M.}~\bibnamefont {Roa-Villescas}}, \bibinfo {author} {\bibfnamefont {J.~G.}\ \bibnamefont {Liu}}, \bibinfo {author} {\bibfnamefont {P.~W.}\ \bibnamefont {Wijnings}}, \bibinfo {author} {\bibfnamefont {S.}~\bibnamefont {Stuijk}},\ and\ \bibinfo {author} {\bibfnamefont {H.}~\bibnamefont {Corporaal}},\ }in\ \href {https://doi.org/10.1109/CSCE60160.2023.00025} {\emph {\bibinfo {booktitle} {2023 Congress in Computer Science, Computer Engineering, \& Applied Computing (CSCE)}}}\ (\bibinfo  {publisher} {IEEE},\ \bibinfo {address} {Las Vegas, USA},\ \bibinfo {year} {2023})\ pp.\ \bibinfo {pages} {123--130}\BibitemShut {NoStop}%
\bibitem [{\citenamefont {Dechter}(2022)}]{dechter2022uai}%
  \BibitemOpen
  \bibfield  {author} {\bibinfo {author} {\bibfnamefont {R.}~\bibnamefont {Dechter}},\ }\href {https://uaicompetition.github.io/uci-2022/competition-entry/tasks/} {\bibinfo {title} {{UAI} 2022 {Probabilistic} {Inference} {Competition}}} (\bibinfo {year} {2022}),\ \bibinfo {note} {accessed: 2024-02-11}\BibitemShut {NoStop}%
\bibitem [{\citenamefont {Liao}\ \emph {et~al.}(2019)\citenamefont {Liao}, \citenamefont {Liu}, \citenamefont {Wang},\ and\ \citenamefont {Xiang}}]{liao2019differentiable}%
  \BibitemOpen
  \bibfield  {author} {\bibinfo {author} {\bibfnamefont {H.~J.}\ \bibnamefont {Liao}}, \bibinfo {author} {\bibfnamefont {J.~G.}\ \bibnamefont {Liu}}, \bibinfo {author} {\bibfnamefont {L.}~\bibnamefont {Wang}},\ and\ \bibinfo {author} {\bibfnamefont {T.}~\bibnamefont {Xiang}},\ }\href@noop {} {\bibfield  {journal} {\bibinfo  {journal} {Physical Review X}\ }\textbf {\bibinfo {volume} {9}},\ \bibinfo {pages} {031041} (\bibinfo {year} {2019})}\BibitemShut {NoStop}%
\bibitem [{\citenamefont {Joyce}(2021)}]{joyce2021stanford}%
  \BibitemOpen
  \bibfield  {author} {\bibinfo {author} {\bibfnamefont {J.}~\bibnamefont {Joyce}},\ }in\ \href@noop {} {\emph {\bibinfo {booktitle} {The {Stanford} Encyclopedia of Philosophy}}},\ \bibinfo {editor} {edited by\ \bibinfo {editor} {\bibfnamefont {E.~N.}\ \bibnamefont {Zalta}}}\ (\bibinfo  {publisher} {Metaphysics Research Lab, Stanford University},\ \bibinfo {year} {2021})\ \bibinfo {edition} {{F}all 2021}\ ed.\BibitemShut {Stop}%
\bibitem [{\citenamefont {Marinescu}(2022)}]{marinescu2022merlin}%
  \BibitemOpen
  \bibfield  {author} {\bibinfo {author} {\bibfnamefont {R.}~\bibnamefont {Marinescu}},\ }\href@noop {} {\bibinfo {title} {Merlin}} (\bibinfo {year} {2022}),\ \bibinfo {note} {\url{https://www.ibm.com/opensource/open/projects/merlin/}}\BibitemShut {NoStop}%
\bibitem [{\citenamefont {Mooij}(2010)}]{mooij2010libdai}%
  \BibitemOpen
  \bibfield  {author} {\bibinfo {author} {\bibfnamefont {J.~M.}\ \bibnamefont {Mooij}},\ }\href {http://www.jmlr.org/papers/volume11/mooij10a/mooij10a.pdf} {\bibfield  {journal} {\bibinfo  {journal} {Journal of Machine Learning Research}\ }\textbf {\bibinfo {volume} {11}},\ \bibinfo {pages} {2169} (\bibinfo {year} {2010})}\BibitemShut {NoStop}%
\bibitem [{uai(2010)}]{uai2010summary}%
  \BibitemOpen
  \href {https://www.cs.huji.ac.il/project/UAI10/summary.php} {\bibinfo {title} {Summary of the 2010 {UAI} approximate inference challenge}} (\bibinfo {year} {2010}),\ \bibinfo {note} {accessed: 2021-08-21}\BibitemShut {NoStop}%
\bibitem [{\citenamefont {Gogate}(2014)}]{vibhav2014uai}%
  \BibitemOpen
  \bibfield  {author} {\bibinfo {author} {\bibfnamefont {V.}~\bibnamefont {Gogate}},\ }\href {https://www.ics.uci.edu/~dechter/softwares/benchmarks/Uai14/UAI_2014_Inference_Competition.pdf} {\bibinfo {title} {{UAI} 2014 {Probabilistic} {Inference} {Competition}}} (\bibinfo {year} {2014}),\ \bibinfo {note} {accessed: 2021-08-21}\BibitemShut {NoStop}%
\bibitem [{\citenamefont {Roa-Villescas}\ \emph {et~al.}(2022)\citenamefont {Roa-Villescas}, \citenamefont {Wijnings}, \citenamefont {Stuijk},\ and\ \citenamefont {Corporaal}}]{roa2022partial}%
  \BibitemOpen
  \bibfield  {author} {\bibinfo {author} {\bibfnamefont {M.}~\bibnamefont {Roa-Villescas}}, \bibinfo {author} {\bibfnamefont {P.~W.}\ \bibnamefont {Wijnings}}, \bibinfo {author} {\bibfnamefont {S.}~\bibnamefont {Stuijk}},\ and\ \bibinfo {author} {\bibfnamefont {H.}~\bibnamefont {Corporaal}},\ }in\ \href {https://doi.org/10.1109/DSD57027.2022.00064} {\emph {\bibinfo {booktitle} {2022 25th Euromicro Conference on Digital System Design (DSD)}}}\ (\bibinfo  {publisher} {IEEE},\ \bibinfo {address} {Gran Canaria, Spain},\ \bibinfo {year} {2022})\ pp.\ \bibinfo {pages} {429--437}\BibitemShut {NoStop}%
\bibitem [{\citenamefont {Ferris}\ and\ \citenamefont {Poulin}(2014)}]{ferris2014tensor}%
  \BibitemOpen
  \bibfield  {author} {\bibinfo {author} {\bibfnamefont {A.~J.}\ \bibnamefont {Ferris}}\ and\ \bibinfo {author} {\bibfnamefont {D.}~\bibnamefont {Poulin}},\ }\href {https://journals.aps.org/prl/abstract/10.1103/PhysRevLett.113.030501} {\bibfield  {journal} {\bibinfo  {journal} {Physical review letters}\ }\textbf {\bibinfo {volume} {113}},\ \bibinfo {pages} {030501} (\bibinfo {year} {2014})}\BibitemShut {NoStop}%
\end{thebibliography}%

\end{document}